\documentclass[journal]{IEEEtran}
\usepackage{array}
\usepackage{textcomp}
\usepackage{stfloats}
\usepackage{url}
\usepackage{verbatim}
\usepackage{graphicx}

\usepackage{cite}
\usepackage{amsmath,amssymb,amsfonts}
\usepackage{algorithm,algorithmic}
\usepackage{textcomp}
\usepackage{xcolor}
\usepackage{siunitx}
\usepackage{upgreek}
\usepackage{fancyhdr}
\usepackage{makecell}
\usepackage{marvosym}
\usepackage{amsthm} 
\usepackage{booktabs}
\usepackage{xspace}
\usepackage{anyfontsize}

\newcommand{\method}{MAGPIE\xspace}
\newcommand{\methodfull}{\textbf{M}ulti-\textbf{AG}ent \textbf{P}reference-\textbf{I}ntegrated l\textbf{E}arning (\method)\xspace}

\newtheorem{assumption}{\bf Assumption}
\newtheorem{definition}{\bf Definition}
\newtheorem{theorem}{\bf Theorem}
\newtheorem{lemma}{\bf Lemma}

\newtheorem{corollary}{\bf Corollary}
\newtheorem*{remark}{Remark}

\makeatletter
\renewcommand{\eqref}[1]{\textup{Eq.~\ref{#1}}}
\makeatother

\newcommand{\revise}[1]{#1}
\usepackage[colorlinks,
            linkcolor=blue,
            anchorcolor=black,
            urlcolor=magenta,
            citecolor=blue]{hyperref}

\begin{document}

\title{Multi-Agent Reinforcement Learning via Agent-Specific Preference}

\author{Ni Mu$^*$, Yao Luan$^*$, Yiqin Yang$^\dagger$, Qing-Shan Jia$^\dagger$
        % <-this % stops a space
\thanks{Ni Mu, Yao Luan and Qing-Shan Jia are with CFINS, Department of Automation, 
BNRist, 
Beijing Key Laboratory of Embodied Intelligence Systems, 
Institute for Embodied Intelligence and Robotics, 
Tsinghua University, Beijing 100084, China, 
{\tt\small \{mn23@mails., luany23@mails., jiaqs@\} tsinghua.edu.cn}. 
Yiqin Yang is with the Institute of Automation, Chinese Academy of Sciences, Beijing 100190, China, {\tt\small yiqin.yang@ia.ac.cn}. 
$^*$Equal contribution. 
$^\dagger$Corresponding author. 
}%
}

% The paper headers
\markboth{Journal of \LaTeX\ Class Files,~Vol.~14, No.~8, August~2021}%
{Shell \MakeLowercase{\textit{et al.}}: A Sample Article Using IEEEtran.cls for IEEE Journals}

% \IEEEpubid{0000--0000/00\$00.00~\copyright~2021 IEEE}
% Remember, if you use this you must call \IEEEpubidadjcol in the second
% column for its text to clear the IEEEpubid mark.

\maketitle

\begin{abstract}
Multi-agent reinforcement learning (MARL) is a powerful framework for solving complex collaborative tasks, but it relies heavily on well-defined global reward functions. 
Designing such rewards is challenging, especially in systems with heterogeneous agents, where a single scalar objective may fail to capture diverse behaviors. 
In this paper, we introduce \methodfull, which addresses these challenges through agent-specific preference modeling. 
Each agent is evaluated by a dedicated expert through preference signals, eliminating the need for global evaluation. 
We theoretically prove that optimizing these decentralized preferences converges to a Nash equilibrium policy. 
To integrate local preferences into a coherent global objective, we construct agent-specific reward models from preference data and combine them via a monotonic aggregation mechanism. 
We further prove that optimizing this aggregate reward model is equivalent to training the Nash equilibrium policy. 
Extensive experiments on benchmark multi-agent tasks and a sequential production line task show that \method achieves performance comparable to reward-engineered baselines, demonstrating its potential to facilitate policy learning in scenarios where precise reward engineering is impractical. 
\end{abstract}

\def\abstractname{Note to Practitioners}
\begin{abstract}
Multi-agent systems are widely used in modern engineering applications. 
For example, autonomous vehicle fleets coordinate to prevent collisions while maintaining efficiency, and industrial manufacturing lines work together to meet production targets without causing buffer overflows. 
Multi-agent reinforcement learning (MARL) provides a powerful framework for enabling such collaboration, but its success depends heavily on well-designed reward functions.
Designing these rewards is often challenging, especially when agents play distinct roles, as it is difficult to translate complex interactions and diverse agent behaviors into precise numerical signals. 
In contrast, providing comparative feedback on preferred behaviors is often more intuitive than specifying explicit mathematical rewards. 
In this paper, we introduce \methodfull, a framework that leverages agent-specific preference signals in the multi-agent learning process. 
\method learns agent-specific reward models and combines them into a unified global objective using monotonic aggregation. 
By optimizing this objective, we can derive Nash equilibrium solutions. 
\revise{Importantly, preferences can be provided by lightweight automated rules or domain-specific heuristics, eliminating the need for costly human annotators. }
\method is effective, easy to implement, and particularly suitable for complex systems where traditional reward design is impractical. 
\end{abstract}

\begin{IEEEkeywords}
Reinforcement learning, Multi-agent system, Preference-based optimization, Nash equilibrium. 
\end{IEEEkeywords}

\section{Introduction} \label{sec:intro}

Multi-agent systems are increasingly deployed in real-world scenarios, such as autonomous vehicle fleets navigating urban areas \cite{yu2020distributed}, dynamic user scheduling in wireless networks \cite{wang2025populations}, and teams of agents in strategic games \cite{berner2019dota}.
These systems require multiple heterogeneous agents to collaborate effectively while adapting to dynamic environments. 
Multi-agent reinforcement learning (MARL) has proven effective in these coordination scenarios \cite{mappo}. 
However, its success typically relies on a well-designed global reward function, which quantifies the benefits of joint actions \cite{vdn, qmix}. 
In complex scenarios, where agents are heterogeneous, i.e., agents have diverse roles and objectives \cite{wang2025beyond}, designing such a single reward function can be both difficult and infeasible \cite{ibarz2018reward}. 
This challenge motivates the exploration of alternative methods that are more intuitive and human-centric to guide policy learning.

Preference-based reinforcement learning (PbRL) \cite{pebble, mu2024sepoa, mu2025clarify, luan2025stair, mu2025magpie_cac} emerges as a promising solution, which utilizes human comparisons of trajectory segments to guide policy learning, bypassing the challenges of complex reward engineering. 
PbRL has shown effectiveness in single-agent domains, such as robotics \cite{ni2025senior}, large language model alignment \cite{2022_RLHF_LLM}, and energy optimization \cite{mu2025pbmorl}, all without explicit rewards \cite{luan2026collie}. 
However, directly extending PbRL to multi-agent settings introduces unique challenges. 
First, humans may struggle to evaluate collective team behavior, especially as team size and agent heterogeneity increase. 
Existing methods \cite{kou2025offline, bui2025omapl, zhu2024decoding} often require experts to assess entire team trajectories, which involve diverse agents. This can produce noisy and inconsistent preference signals \cite{mu2024sepoa}. 
Also, current approaches \cite{kang2025dpm, zhang2025multiagent} struggle to accommodate heterogeneous agents, as they assess agents' policies by comparing behaviors across different agents, which implicitly assumes agent homogeneity and comparability. 
This assumption fails in systems where agents have distinct roles, capabilities, and objectives. 
To the best of our knowledge, no existing method addresses these challenges while eliminating the need for reward engineering. 
This demonstrates the need for a novel approach to preference-driven multi-agent coordination.

In this paper, we introduce \methodfull, a framework that addresses the above gap. 
As illustrated in Figure \ref{fig:method_architecture}, \method decomposes the complex global evaluation into a set of feasible, specialized local evaluations, and then integrates them with theoretical guarantees.
Specifically, each agent is assigned a dedicated expert, who provides \emph{agent-specific preferences} by evaluating the behavior of their assigned agent. 
Each dedicated expert only focuses on the agent's individual contribution and collaborative effectiveness, which reduces cognitive load and naturally supports heterogeneous agents. 
We theoretically show that, conceptually, using agent-specific preferences to iteratively improve each agent's policy leads to a Nash equilibrium policy (Theorem \ref{thm:1_nash_iteration}). 
To connect local preferences with practical policy optimization, we train independent reward models for each agent using preference data, and then propose a \emph{monotonic aggregation} mechanism to aggregate these local rewards into a coherent global reward. 
We prove that maximizing this aggregated reward guarantees a joint Nash equilibrium policy (Theorem \ref{thm:2_sum_nash}, \ref{thm:4_epsilon_nash}). 
These theories allow standard MARL algorithms to integrate seamlessly with our framework by replacing their reward signal with the learned aggregated reward.

To demonstrate the effectiveness of \method, we conduct experiments on several benchmark MARL tasks with heterogeneous agent roles and coordination challenges. 
Results show that \method achieves performance comparable to the ``oracle'' reward engineering baseline, which uses ground-truth rewards to learn the policy. 
Additionally, we evaluate \method on a custom sequential mobile phone production line task. 
In this task, \method similarly matches the oracle's performance, demonstrating its potential for practical deployment in complex, real-world scenarios. 
Through this work, we aim to provide a preference-driven optimization framework for heterogeneous multi-agent collaboration, facilitating the broader application of MARL in complex tasks, where reward engineering is prohibitive.

\begin{figure*}[t!]
\centering
\includegraphics[width=0.9\linewidth]{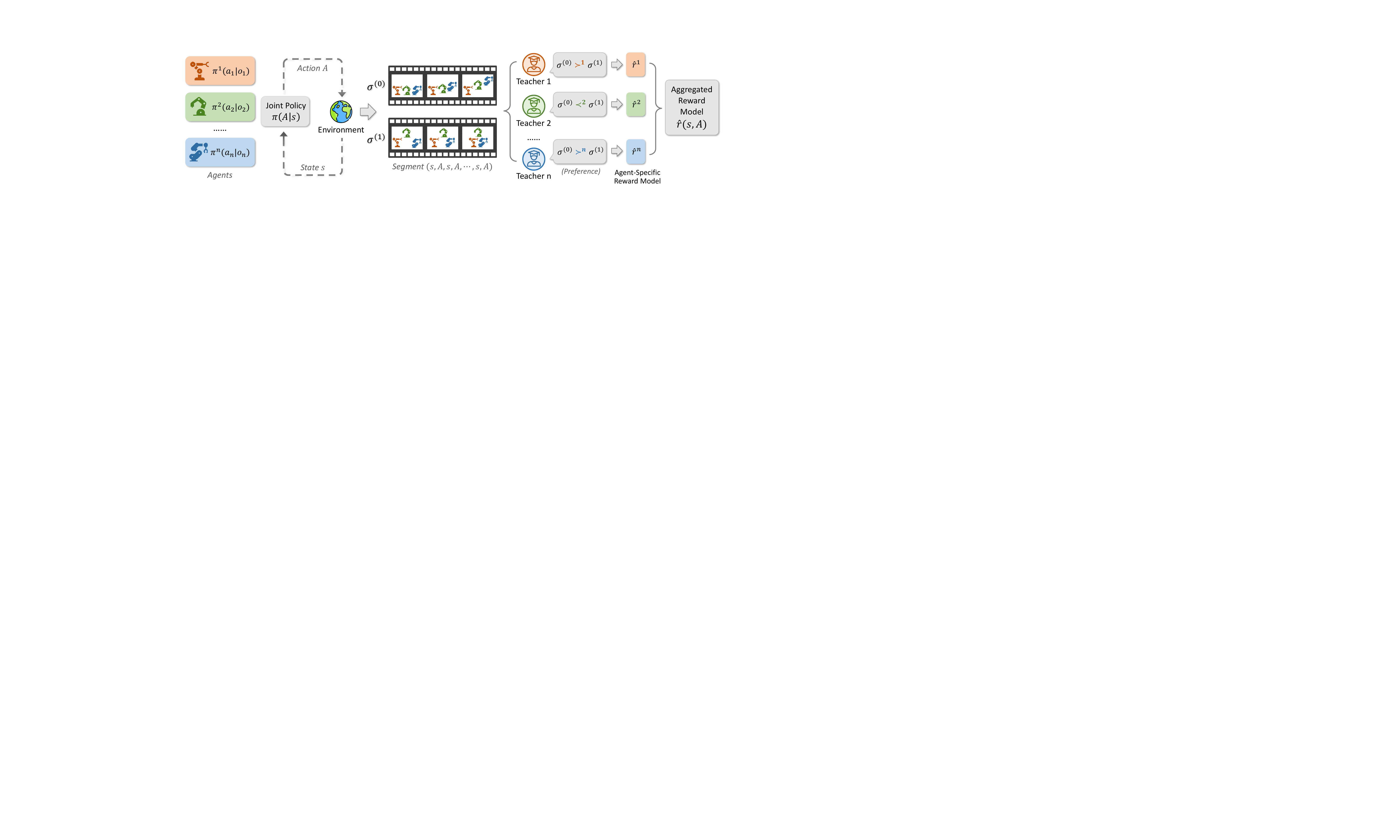}
\caption{
An overview of \method. The agent interacts with the environment and learns a multi-agent policy, without a predefined reward function. 
Instead, $n$ external experts act as teachers provide agent-specific preferences between trajectory segment pairs, which are used to train decentralized reward models $\hat{r}^i$. 
These local models are aggregated via a monotonic function into a global reward $\hat{r}$, enabling standard MARL algorithms to optimize the policy. 
}
\label{fig:method_architecture}
\end{figure*}

The main contributions of this paper are as follows:
\begin{itemize}
\item We establish a theoretical framework for preference-based MARL with agent-specific preference evaluation, proving that the $n$-to-$n$ expert-agent preference mechanism can guide the learning toward Nash equilibrium multi-agent policies (Theorems \ref{thm:1_nash_iteration}, \ref{thm:2_sum_nash}, \ref{thm:4_epsilon_nash}). 

\item We propose \method, which implements this framework. 
By constructing Bradley-Terry models and optimizing the cross-entropy loss function, \method learns local reward models aligned with agent-specific preferences (Theorem \ref{thm:reward_consistency}). 
Using additive monotonic aggregation and a regularization mechanism, \method integrates with standard MARL algorithms to optimize multi-agent policies. 

\item We conduct extensive experiments to evaluate \method on multiple benchmark tasks and a custom smart manufacturing task. 
Results show that \method achieves performance comparable to oracle methods using only decentralized preference signals, validating its potential for real-world applications. 
\end{itemize}

% paper structure
The remaining sections are organized as follows. 
Section \ref{sec:related_work} reviews the related work. 
Section \ref{sec:pblm_fmlation} introduces preliminaries and the problem formulation.
Section \ref{sec:method} presents the theoretical guarantees and the \method algorithm. 
Section \ref{sec:experiment} describes the experimental setting and discusses the results. 
Section \ref{sec:conclusion} concludes the paper.

\section{Related Work}
\label{sec:related_work}

\subsection{Multi-Agent Reinforcement Learning}

Reinforcement learning (RL) has achieved remarkable success in single-agent decision-making tasks, such as game-playing \cite{silver2017mastering, berner2019dota}, robotic control \cite{liu2021deep}, and data center cooling management \cite{mu2023integrating, mu2024large_scale_DC}. 
Building on this success, RL has been extended to multi-agent systems, leading to the development of multi-agent reinforcement learning (MARL) \cite{chang2022emapp, wang2025beyond, wang2025solo, yang2026globediff}. 
As a powerful framework, MARL enables coordination among agents and solves complex tasks, including autonomous vehicle coordination \cite{yu2020distributed}, distributed resource allocation \cite{hady2025multi}, and multi-robot collaboration \cite{oroojlooy2023review}.

The centralized training with decentralized execution (CTDE) framework is commonly used in MARL, which enables distributed execution while ensuring effective global performance through centralized training. 
Notable methods include value decomposition networks (VDN) \cite{vdn} and QMIX \cite{qmix}, which learn decomposed agent-specific value functions, and then combine them monotonically into a global value function. 
More recently, multi-agent proximal policy optimization (MAPPO) \cite{mappo} has emerged as a strong on-policy policy gradient method.

Despite these advancements, the effectiveness of cooperative MARL algorithms heavily relies on a well-defined global reward function $r(s, A)$, which accurately captures the immediate contribution of joint actions $A$ in state $s$. 
Designing such a function is often challenging or impractical, causing a ``reward engineering`` challenge.
The challenges can be classified into three key aspects. 
First, a sparse reward function provides limited feedback in complex tasks, where agents only receive rewards after achieving milestones, such as task completion. 
This lack of intermediate guidance reduces sample efficiency and hinders exploration \cite{rengarajan2022reinforcement}. 
Second, designing reward functions to provide dense signals often requires extensive domain knowledge \cite{eschmann2021reward}, especially in heterogeneous systems, where agents with distinct roles and capabilities must coordinate \cite{du2019liir}. 
For example, in cooperative exploration tasks, balancing rewards for mobile robots that explore terrain, and stationary robots that guard key locations, can lead to fragile or suboptimal designs. 
Third, many desirable real-world behaviors, such as ``fluid teamwork'' or ``graceful motion'', are difficult to define mathematically \cite{Rekik2025quality}.
This makes it challenging to align human intent with numerical rewards through manual reward engineering.
In summary, these challenges illustrate the limitations of traditional MARL in real-world settings, motivating more natural, human-centric supervision signals.

\subsection{Preference-based Guidance for MARL}

Preference-based RL (PbRL) has proven effective and is now widely applied across various domains, as it avoids complex reward engineering through the use of pairwise comparisons, i.e., ``trajectory A is preferred over trajectory B'', rather than absolute reward values \cite{pbrl_basic2017, pebble, mu2024pbmorl_cac, mu2025clarify}. 
Its applications range from robotic control \cite{ni2025senior} to fine-tuning large language models (LLMs) \cite{2022_RLHF_LLM}. 

Building on this success, recent studies have explored how to apply preference-based supervisory signals to guide policy learning in multi-agent systems \cite{kou2025offline, bui2025omapl, zhu2024decoding, kang2025dpm, zhang2025multiagent, kim2025human}. 
However, multi-agent settings introduce unique challenges that naive extensions fail to address, resulting in three key limitations in the current literature. 

The first limitation arises from global preference evaluation. 
As team size and agent diversity increase, evaluating entire multi-agent trajectories becomes impractical. 
Prior works \cite{kou2025offline, bui2025omapl, zhu2024decoding} directly extend single-agent PbRL by evaluating multi-agent trajectories as a whole. 
While conceptually valid, this is cognitively overwhelming and error-prone in complex scenarios. 
For example, in MOBA games, human evaluators must assess attackers, defenders, and supporters collectively, despite their differing roles and contributions. 
This often leads to cognitive overload and evaluation errors \cite{mu2024sepoa, berner2019dota}.

The second limitation is the assumption of agent homogeneity. 
To address the coarseness of global evaluations as mentioned above, some methods \cite{kang2025dpm, kim2025human} compare individual agents' behaviors within the same trajectory, i.e., ``Agent i contributed more than Agent j''. 
However, this implicitly assumes that agents have similar roles and behaviors, which is incompatible with heterogeneous multi-agent systems, where agents differ in observation spaces, action spaces, and role-specific objectives. 
In such cases, direct comparisons are either meaningless or misleading. 

The third limitation is partial reliance on ground-truth rewards. 
Some methods \cite{kim2025human} incorporate ground-truth rewards as auxiliary supervision during training,
This contradicts the core goal of PbRL, which is to eliminate reward engineering. 
As a result, these methods become inapplicable when ground-truth rewards are unavailable.

In summary, to our best knowledge, current research lacks a framework that simultaneously avoids complex global evaluations, supports heterogeneous agents, and eliminates reward engineering. 
To address these gaps, we propose a framework, \method, which uses local preferences to evaluate individual agent contributions, coupled with a provably monotonic aggregation mechanism to combine these local assessments into a coherent global objective. 
\method overcomes all three limitations: it avoids the complexity of global evaluations, supports agent heterogeneity through localized assessments, and relies solely on preference data without requiring ground-truth rewards.

\section{Problem Formulation} \label{sec:pblm_fmlation}

In this section, we first present the multi-agent MDP settings and the centralized training with decentralized execution (CTDE) framework, which is commonly used in MARL \cite{vdn}. 
Then, we provide the preference formulation in our settings.

\subsection{The Multi-Agent MDP Definition and CTDE Framework}
\label{subsec:pre_marl}

% single-agent MDP
In single-agent settings, a Markov Decision Process (MDP) is represented as a tuple $(\mathcal{S}, \mathcal{A}, P, r, \gamma)$, where $\mathcal{S}$ and $\mathcal{A}$ denote the state and action spaces, respectively. 
The state transition probability is defined as $P(s'|s, a): \mathcal{S} \times \mathcal{A} \times \mathcal{S} \rightarrow [0, 1]$, while the immediate reward is $r(s, a): \mathcal{S} \times \mathcal{A} \rightarrow \mathbb{R}$. The discount factor $\gamma \in (0, 1)$ balances short-term and long-term rewards.
The target is to learn a policy $\pi$ that maximizes the expected total discounted reward $J(\pi)=\mathbb{E}_{\pi,s_0}[\sum_{t=0}^\infty \gamma^t r(s_t,a_t)]$.

% Dec-POMDP
Decentralized partially observable Markov decision processes (Dec-POMDP) extend the MDP framework to cooperative multi-agent systems, where multiple agents work together to achieve a common goal. 
Formally, A Dec-POMDP with $n$ agents is defined as a tuple $(\mathcal{S}, \mathcal{A}, \mathcal{O}, r, P, n, \gamma)$, where $\mathcal{S}$ represents the global state space, and $\mathcal{A}$ is the global action space, with each agent $i$ selecting an action $a^i$, composing the joint global action $A =(a^1, \dots, a^n)\in\mathcal{A}$. 
Agents have limited access to the global state $s$, while making local observations $o^i = O(s, i)$ from the global state, drawn from the observation space $\mathcal{O}$. 
The shared reward function $r(s, A)$ depends on the global state $s$ and the joint action $A$. 
% 
% multi-agent objective
The policy for each agent $i$ is represented as $\pi^i(a^i | o^i)$, conditioned on its local observation. 
The agents' objective in a Dec-POMDP is to maximize the expected discounted return:
\begin{equation}
    \max J(\theta) = \mathbb{E}_{(A_t, s_t)} \left[ \sum_t \gamma^t r(s_t, A_t) \right]
\end{equation}
To achieve this, agents must collaboratively select their actions to optimize the joint reward.

% CTDE
Centralized training for decentralized execution (CTDE) is a widely adopted framework for solving Dec-POMDPs. 
In CTDE, agents are trained with global information, such as the global state $s$ and the joint action $A$. 
During execution, however, each agent acts independently based on its local observation $o_i$. 
This approach enables efficient training through access to global dynamics, while allowing for flexible decentralized execution. 
In this work, we employ the CTDE framework for agent training.

\subsection{Preference Formulation in Multi-Agent Settings}

In single-agent settings, following prior work \cite{pbrl_basic2012, pbrl_basic2017}, preferences can be defined as a tuple $(\sigma^{(0)}, \sigma^{(1)}, p)$. 
Here, $\sigma^{(i)}$ ($i \in \{0,1\}$) represents an observation-action sequence, also referred to as a segment, expressed as $\{s_k, a_k, \dots, s_{k+H-1}, a_{k+H-1}\}$ and \revise{share the same fixed length $H$}. 
$p \in \{0, 0.5, 1\}$ denotes the preference, indicating which of the two segments is preferred. 
Specifically, $p=0$ indicates that $\sigma^{(0)}$ is preferred over $\sigma^{(1)}$ (denoted as $\sigma^{(0)} \succ \sigma^{(1)}$), while $p=1$ indicates the opposite ($\sigma^{(1)} \succ \sigma^{(0)}$). 
$p=0.5$ indicates indifference, meaning neither segment is preferred over the other.

In multi-agent settings, we define the preference of agent $i$ as a tuple $(\sigma^{(0)}, \sigma^{(1)}, p^i)$, where $p^i$ is a binary indicator of the preferred segment for agent $i$. 
We denote $p^i=0$ as $\sigma^{(0)} \succ^i \sigma^{(1)}$.
This preference for agent $i$ is provided by a designated expert (also referred to as the teacher in this paper), as evaluating the joint behavior of multiple agents can be challenging for humans. 
Instead, these teachers can provide insights on an individual agent's performance by evaluating its adherence to the agent's role, collaboration with other agents, and contributions to the global objective. 

To align the problem formulation with the MARL framework, we define a global reward model $\hat{r}: \mathcal{S}\times \mathcal{A} \rightarrow \mathbb{R}$. 
Our algorithm aims to find a policy $\pi^i(a^i|o^i)$ for each agent $i$ that maximizes the discounted return of the reward model $\hat{r}$.
However, a gap remains between the global reward model and each agent's preference, which we will address in the subsequent section with our proposed method.

\section{\method: A Preference-based MARL Method with Agent-Specific Preference}
\label{sec:method}

\begin{algorithm}[t]
\caption{Nash equilibrium policy via iterative local preference optimization}
\label{alg:1_nash_iteration}
\begin{algorithmic}[1]
\STATE Initialize joint policy $\pi = (\pi^1, \dots, \pi^n)$
\STATE Initialize a boolean flag $\text{updated} \gets \text{true}$
\WHILE{$\text{updated} = \text{true}$}
    \STATE Set the boolean flag $\text{updated} \gets \text{false}$
    \FOR{each agent $i \in \{1, \dots, n\}$}
        \STATE Let $\Pi^i$ be the set of possible policies for agent $i$
        \FOR{each candidate policy ${\pi^i}' \in \Pi^i$}
            \STATE Generate segments $\sigma, \sigma'$ from $\pi, ~ ({\pi^i}', \pi^{-i})$
            \IF{the teacher for agent $i$ judges $\sigma' \succ^i \sigma$}
                \STATE $\pi^i \gets {\pi^i}', \; \text{updated} \gets \text{true}$
                \STATE \textbf{break} (inner for loop)
            \ENDIF
        \ENDFOR
    \ENDFOR
\ENDWHILE
\RETURN $\pi^* = (\pi^1, \dots, \pi^n)$
\end{algorithmic}
\end{algorithm}

\subsection{Theoretical Analysis}

In this subsection, we establish the theoretical foundations of the \method framework, demonstrating how our approach ensures convergence to a Nash equilibrium policy. 
\revise{Figure \ref{fig:theory_framework} illustrates the logical progression of our analysis.}

First, we assume the presence of preferences over pairs of trajectory segments for each agent. 
Specifically, for each agent $i$, there exists a teacher
\footnote{\revise{The term ``teacher'' in this paper does not strictly imply a human evaluator. In practice, teachers can be automated evaluation scripts or heuristic rules, making the framework scalable without requiring continuous human intervention.}
}
who can compare any two trajectory segments of arbitrary finite length $H$ and judge which one better reflects the agent's individual contribution to the collective goal. 
To formalize this, we introduce the following assumption.

\begin{figure}
    \centering
    \includegraphics[width=0.7\linewidth]{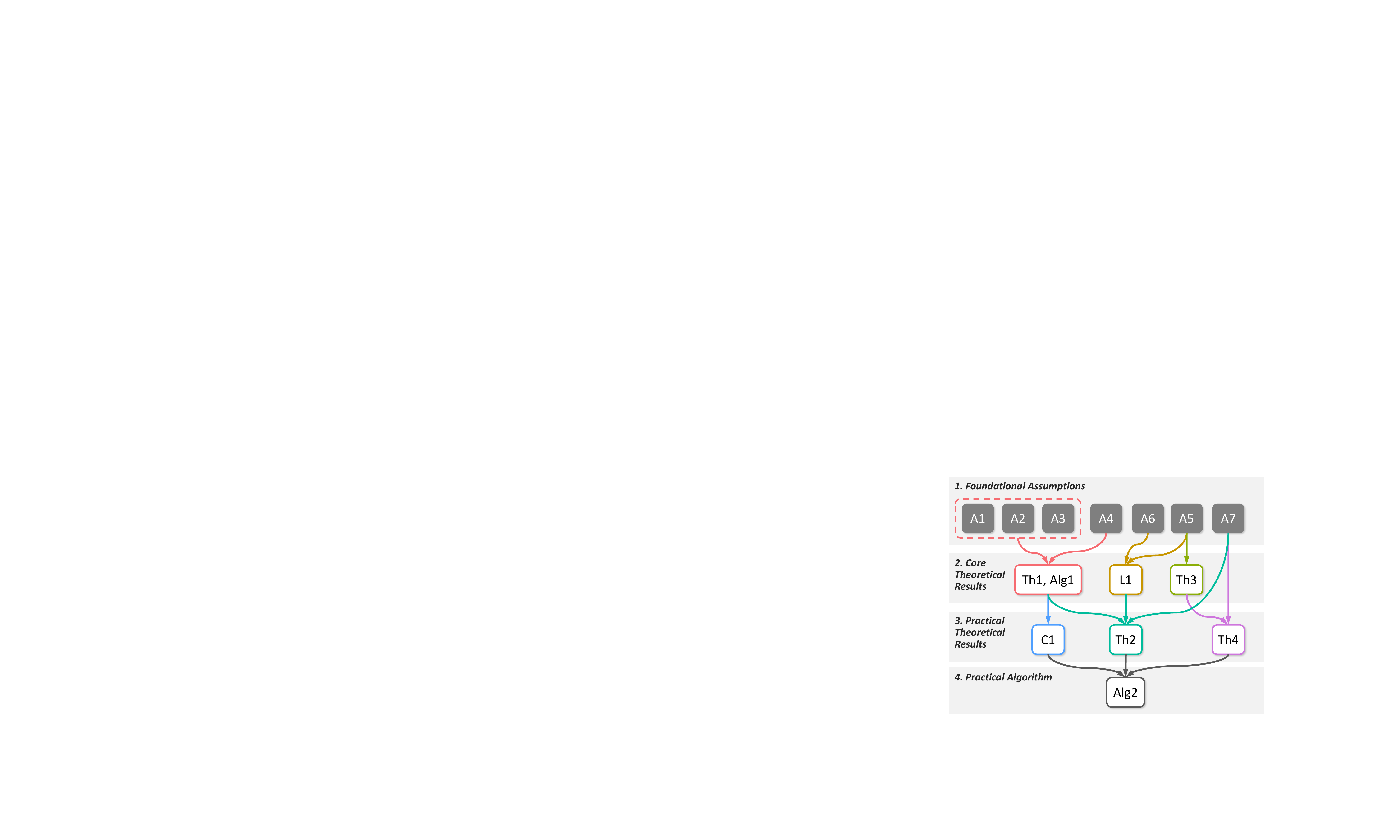}
    \caption{
    \revise{Dependency graph of assumptions (A), lemmas (L), theorems (Th), corollaries (C), and algorithms (Alg) in Section \ref{sec:method}. }
    }
    \label{fig:theory_framework}
\end{figure}

\begin{assumption} \label{asp:1_preference}
For each agent $i$, there exists a well-defined preference $p^i \in \{0, 0.5, 1\}$ over any pair of \revise{equal-length trajectory segments $(\sigma_0, \sigma_1)$ for any finite segment length $H$}. 
These preferences satisfy the properties of \textbf{symmetry}, \textbf{consistency}, and \textbf{transitivity}, which are defined as follows.
\end{assumption}

\begin{definition}[Symmetry]  
Symmetry means that if trajectory segment $\sigma_0$ is preferred over $\sigma_1$ for agent $i$, then the opposite relation also holds. Formally,
\begin{equation}
\sigma_0 \succ^i \sigma_1 \implies \sigma_1 \prec^i \sigma_0 .
\end{equation}
This ensures that preferences are reversible. 
\end{definition}

\begin{definition}[Consistency]
Consistency requires that the preference between two fixed state–action sequences is invariant to their starting time. 
Formally, if $\sigma_0^{t_0} \succ^i \sigma_1^{t_0}$, then for all $t > 0$, $\sigma_0^{t} \succ^i \sigma_1^{t}$, where $\sigma^{t}$ denotes a segment starting at time $t$, and $\sigma_i^{t_0}, \sigma_i^{t}$ ($i = 0, 1$) are with identical state-action sequence $(s,A,s',\cdots)$.
\end{definition}

\begin{definition}[Transitivity]
Transitivity guarantees logical coherence across multiple comparisons. 
Formally, if $\sigma_0 \succ^i \sigma_1$ and $\sigma_1 \succ^i \sigma_2$, then $\sigma_0 \succ^i \sigma_2$. 
This ensures that the teacher's feedback does not contradict itself when extended to multiple comparisons. 
\end{definition}

The above decentralized design, where each agent has its own evaluator, avoids the need for a single expert to assess the joint behavior of all agents. 
This makes the proposed framework applicable to heterogeneous multi-agent systems. 
We further assume that a reliable teacher is available for each agent to provide such feedback.

\begin{assumption} \label{asp:teacher}
For each agent $i$, there exists a teacher that can provide the preference feedback specified in Assumption \ref{asp:1_preference}. 
\end{assumption}

\revise{Assumptions \ref{asp:1_preference} and \ref{asp:teacher} assume that each agent's preferences are well‑defined, consistent, and readily accessible. }
This implies the teacher's feedback is based on stable criteria related to the task objectives. 
It is therefore reasonable to postulate that the task has an underlying true reward function, which reflects the global objective. 
We formalize it in Assumption \ref{asp:2_reward_bounded}.

\begin{assumption} \label{asp:2_reward_bounded}
For the multi-agent task under consideration, there exists a true global reward function $r(s, A)$.
Furthermore, its absolute value is bounded by a constant $r_{\text{max}}$, i.e., $|r(s, A)| \le r_{\text{max}}$ for all state-action pairs.
\end{assumption}

The bounded reward in Assumption \ref{asp:2_reward_bounded} is a common and practical condition, which prevents divergence in learning and supports theoretical analysis \cite{melo2001convergence, li2021decentralized, li2024anocbabased}. 
Then, we establish the connection between the agent's local preferences and the global objective through the following assumption.

\begin{assumption} \label{asp:3_alignment}
For any agent $i$, consider a joint policy $\pi = (\pi^i, \pi^{-i})$ and an alternative policy ${\pi^i}'$ for agent $i$ with $\pi^{-i}$ fixed. 
Let $\sigma$ and $\sigma'$ be trajectory segments generated by $\pi$ and $({\pi^i}', \pi^{-i})$, respectively. 
The local preference aligns with the global return such that $\sigma' \succ^i \sigma$ if and only if the expected global return under $({\pi^i}', \pi^{-i})$ is strictly greater than that under $\pi$. 
Formally, 
\begin{equation}
\begin{aligned}
& \mathbb{E}_{\tau \sim (\pi^{i'}, \pi^{-i})} \left[ \sum_{t} \gamma^t r(s_t, A_t) \right] > 
\\
& \quad\quad\quad\quad
\mathbb{E}_{\tau \sim (\pi^{i}, \pi^{-i})} \left[ \sum_{t} \gamma^t r(s_t, A_t) \right] 
\Longleftrightarrow \sigma' \succ^i \sigma .
\end{aligned}
\end{equation}
\end{assumption}

Assumption \ref{asp:3_alignment} establishes that an agent policy's update favored by the agent's teacher, i.e., leading to a locally preferred trajectory, corresponds strictly to an improvement in the true global objective. 
\revise{Intuitively, this means a teacher's local approval of an agent's behavior never contradicts the global team objective. }
This alignment ensures that optimizing agent-specific preferences will drive the system toward globally optimal joint policies, thereby enabling the convergence results presented as follows.

Under Assumptions \ref{asp:1_preference}, \ref{asp:teacher}, \ref{asp:2_reward_bounded} and \ref{asp:3_alignment}, we now prove that a straightforward iterative algorithm converges to a Nash equilibrium joint policy. 
Algorithm \ref{alg:1_nash_iteration} formalizes this process, which systematically examines each agent's policy space while holding the remaining agents fixed, and updates agent policies based exclusively on agent-specific preferences.

\begin{theorem} \label{thm:1_nash_iteration}
    We consider the case where state and action spaces are finite and countable. 
    Under Assumptions \ref{asp:1_preference}, \ref{asp:teacher}, \ref{asp:2_reward_bounded}, and \ref{asp:3_alignment}, the joint policy $\pi^* = (\pi^{1*}, \dots, \pi^{n*})$ returned by Algorithm \ref{alg:1_nash_iteration} constitutes a Nash equilibrium when the trajectory segment length $H \to \infty$. 
    Specifically, for any agent $i$ and any alternative policy ${\pi^i}' \in \Pi^i$,
    \begin{equation}
    \begin{aligned}
        & \mathbb{E}_{\tau \sim (\pi^{i*}, \pi^{-i*})} \left[ \sum_{t=0}^{\infty} \gamma^t r(s_t, A_t) \right] \ge 
        \\
        & \quad\quad\quad\quad
        \mathbb{E}_{\tau \sim ({\pi^i}', \pi^{-i*})} \left[ \sum_{t=0}^{\infty} \gamma^t r(s_t, A_t) \right].
    \end{aligned}
    \end{equation}
\end{theorem}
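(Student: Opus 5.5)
The plan has two parts: first show that Algorithm \ref{alg:1_nash_iteration} terminates, then show that the Nash inequality holds at termination. Both rest on Assumption \ref{asp:3_alignment}, which turns the teacher's local judgment into a comparison of expected global returns. The limit $H \to \infty$ is what makes the segments $\sigma,\sigma'$ informative about the full discounted return $J(\pi)=\mathbb{E}_{\tau\sim\pi}[\sum_{t=0}^\infty \gamma^t r(s_t,A_t)]$.

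\textbf{Step 1: truncation.} For a segment of length $H$, the discounted tail beyond $H$ is at most $\gamma^H r_{\max}/(1-\gamma)$ by Assumption \ref{asp:2_reward_bounded}. This tail vanishes as $H\to\infty$, so the preference on length-$H$ segments is a statement about the infinite-horizon objective $J$. Consistency in Assumption \ref{asp:1_preference} lets us ignore the segment's starting time. Symmetry and transitivity ensure the relation $\succ^i$ used in the algorithm is a coherent strict order, so the comparisons in the inner loop are well defined.

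\textbf{Step 2: termination.} Each accepted update replaces $\pi$ by $({\pi^i}',\pi^{-i})$ with $\sigma'\succ^i\sigma$. By Assumption \ref{asp:3_alignment}, this means $J({\pi^i}',\pi^{-i}) > J(\pi)$, so the global return strictly increases at every update and no joint policy can be revisited. The state and action spaces are finite, so I will take the policy classes $\Pi^i$ to be finite; the natural choice is deterministic stationary policies, and this is the point where the finiteness hypothesis is used. The joint policy space $\prod_i \Pi^i$ is then finite, so only finitely many updates can occur, and the outer while loop exits.

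\textbf{Step 3: equilibrium at exit.} The loop exits only after a full sweep over all agents $i$ and all candidates ${\pi^i}'\in\Pi^i$ in which no teacher reported $\sigma'\succ^i\sigma$. Take the contrapositive of the reverse direction of Assumption \ref{asp:3_alignment}: if $J({\pi^i}',\pi^{-i*})$ exceeded $J(\pi^*)$, the teacher would have reported $\sigma'\succ^i\sigma$ and an update would have occurred. Hence $J({\pi^i}',\pi^{-i*})\le J(\pi^*)$ for every $i$ and every ${\pi^i}'$, which is exactly the claimed inequality.

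\textbf{Main obstacle.} The delicate step is justifying finiteness, or some other termination argument, for $\Pi^i$. If $\Pi^i$ were allowed to contain stochastic policies, strict increase of $J$ alone would not rule out infinitely many updates. The fallback in that case is to use boundedness: $J$ is bounded by $r_{\max}/(1-\gamma)$, so the increasing sequence of returns converges. One would then argue that no candidate can improve on the limit, which needs continuity of $J$ and compactness of the policy set. I will state the finite deterministic-policy interpretation explicitly and mention this limit argument as the extension.
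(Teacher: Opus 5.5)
Your proof is correct and follows the paper's route. The forward direction of Assumption \ref{asp:3_alignment} (preference $\Rightarrow$ strict increase of the global return), together with finiteness of the joint policy space, gives termination; the converse direction gives the Nash inequality at exit. The paper phrases the second step as a contradiction and adds the tail bound $\gamma^H r_{\max}/(1-\gamma)$, which, as in your Step 1, is not logically needed once Assumption \ref{asp:3_alignment} is invoked.

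Restricting $\Pi^i$ explicitly to deterministic stationary policies is more careful than the paper, which infers a finite policy space directly from finite state and action spaces. The stochastic-policy fallback you sketch would, however, need more than continuity and compactness. The algorithm accepts the first improving candidate rather than a best one, so convergence of the returns does not by itself make the limit an equilibrium.
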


\begin{proof}
    We first show that Algorithm \ref{alg:1_nash_iteration} terminates under the given condition. 
    \revise{Since we consider finite state and action spaces, the existence of at least one Nash equilibrium is guaranteed by Nash's theorem \cite{nash1950equilibrium}. }
    Also, the finiteness of state and action spaces implies that each agent's policy space and the joint policy space are finite. 
    In each iteration, Algorithm \ref{alg:1_nash_iteration} checks each agent $i$ and updates its policy if an alternative policy $\pi^{i'}$ exists such that the teacher prefers segments generated by $(\pi^{i'}, \pi^{-i})$ over those generated by the current policy. 
    By Assumption \ref{asp:3_alignment}, such an update strictly increases the expected global return. 
    Since the global return is bounded above (Assumption \ref{asp:2_reward_bounded}) and the joint policy space is finite, the algorithm cannot update indefinitely without repeating a policy. 
    Furthermore, because each update increases the global return, it cannot cycle. 
    Therefore, Algorithm \ref{alg:1_nash_iteration} terminates after a finite number of iterations.

    Suppose Algorithm \ref{alg:1_nash_iteration} returns a joint policy $\pi^*$. 
    We prove that $\pi^*$ constitutes a Nash equilibrium by contradiction. 
    Assume $\pi^*$ is not a Nash equilibrium. 
    Then, there exists an agent $i$ and an alternative policy ${\pi^i}' \in \Pi^i$ such that, while keeping the policies of other agents $\pi^{-i*}$ fixed, the expected global return under the altered joint policy $({\pi^i}', \pi^{-i*})$ is strictly greater than that under $\pi^*$. 
    Formally, 
    { \small
    \begin{equation} \label{eq:global_improvement}
        \mathbb{E}_{\tau' \sim ({\pi^i}', \pi^{-i*})} \left[ \sum_{t=0}^{\infty} \gamma^t r(s_t, A_t) \right] > \mathbb{E}_{\tau \sim \pi^*} \left[ \sum_{t=0}^{\infty} \gamma^t r(s_t, A_t) \right].
    \end{equation}
    }
    
    By Assumption \ref{asp:3_alignment}, the inequality in global return implies a corresponding preference from the agent $i$'s teacher. 
    Let $\sigma^*$ and $\sigma'$ be trajectory segments of length $H$ sampled from the infinite trajectories $\tau$ and $\tau'$ generated by $\pi^*$ and $({\pi^i}', \pi^{-i*})$, respectively. 
    For sufficiently long $H$, the difference in their truncated returns will reflect the difference in their infinite-horizon returns. 
    
    More precisely, due to the bounded reward (Assumption \ref{asp:2_reward_bounded}), the tail contribution of the return beyond time $H$ is bounded:
    \begin{equation} \label{eq:tail_bound}
        \left| \sum_{t=H}^{\infty} \gamma^t r(s_t, A_t) \right| \le \frac{\gamma^H}{1-\gamma} r_{\text{max}}.
    \end{equation}
    
    Let $R_H(\sigma) = \sum_{t=0}^{H-1} \gamma^t r(s_t, A_t)$ denote the discounted return over a segment $\sigma$ of length $H$. The difference in infinite-horizon returns can be expressed as:
    \begin{equation}
    \begin{aligned}
        \Delta R_{\infty} & = \mathbb{E} \left[ \sum_{t=0}^{\infty} \gamma^t r(s_t', A_t') - \sum_{t=0}^{\infty} \gamma^t r(s_t^*, A_t^*) \right] \\
        & = \underbrace{\mathbb{E} \left[ R_H(\sigma') - R_H(\sigma^*) \right]}_{\Delta R_H} 
        \\
        & \quad + \underbrace{\mathbb{E} \left[ \sum_{t=H}^{\infty} \gamma^t \left( r(s_t', A_t') - r(s_t^*, A_t^*) \right) \right]}_{\Delta R_{\text{tail}}}.
    \end{aligned}
    \end{equation}
    
    From \eqref{eq:global_improvement}, we have $\Delta R_{\infty} > 0$. The tail difference $\Delta R_{\text{tail}}$ is bounded by $2 \frac{\gamma^H}{1-\gamma} r_{\text{max}}$ due to \eqref{eq:tail_bound}. Therefore, for the inequality $\Delta R_{\infty} > 0$ to hold, the segment return difference $\Delta R_H$ should be positive and sufficiently large to overcome the maximum possible negative contribution from the tails. Specifically, a sufficient condition is:
    \begin{equation} \label{eq:segment_diff_sufficient}
        \Delta R_H > 2 \frac{\gamma^H}{1-\gamma} r_{\text{max}}.
    \end{equation}
    
    As $H \to \infty$, the right-hand side of \eqref{eq:segment_diff_sufficient} converges to zero. This implies that for any $\Delta R_{\infty} > 0$, there exists a finite $H_0$ such that for all $H \ge H_0$, the generated segments $\sigma'$ and $\sigma^*$ will satisfy $\Delta R_H > 0$. According to Assumption \ref{asp:3_alignment}, this strictly positive difference in segment returns implies the teacher's preference $\sigma' \succ^i \sigma^*$.
    
    However, Algorithm \ref{alg:1_nash_iteration} terminates only when no agent $i$ can find a policy ${\pi^i}'$, such that the newly generated segment is preferred by its teacher over the segment from the current policy $\pi^i$. 
    The existence of such a policy ${\pi^i}'$ for agent $i$, as established above, contradicts the termination condition of the algorithm. 
    Thus, our initial assumption must be false, and the joint policy $\pi^*$ must be a Nash equilibrium.
    This completes the proof.
\end{proof}

\begin{remark}
Theorem \ref{thm:1_nash_iteration} provides a theoretical existence proof. 
It shows that, in principle, optimizing decentralized, agent-specific preferences can lead the system to a global Nash equilibrium. 
However, its assumptions are idealized and computationally impractical, such as infinite trajectory lengths and querying teachers for every policy update, and Algorithm \ref{alg:1_nash_iteration} serves primarily as a conceptual framework. 
The practical implementation of our approach, which employs learned reward models under finite preference data, is detailed in Section \ref{subsec:reward_modeling}, \ref{subsec:marl_with_learned_rewards} and supported by Theorems \ref{thm:2_sum_nash}, \ref{thm:4_epsilon_nash}. 
\end{remark}

\revise{Theorem \ref{thm:1_nash_iteration} states that, if agents keep upgrading based on their own teacher's feedback, the team eventually reaches a joint policy where no one can unilaterally improve, and each agent finds their best role in the team. }
The proof establishes that under infinite segment length, local preferences perfectly reflect global return improvements, ensuring Algorithm \ref{alg:1_nash_iteration} terminates only when no beneficial unilateral deviation exists. 
In practical implementations, we typically select pairs of segments with distinctly different behaviors for comparison \cite{mu2024sepoa, mu2025clarify}, aiming to facilitate reliable teacher judgments.  
Therefore, it is reasonable to assume a minimum return difference $\delta > 0$ between any two segments that the teacher can distinguish, that is, $\exists \delta \ge 0$ such that $|R_H(\sigma_0) - R_H(\sigma_1)| \ge \delta > 0$, where $R_H(\sigma) = \sum_{t=0}^{H-1} \gamma^t r(s_t, A_t)$ denote the discounted return over a segment $\sigma$ of length $H$. 
Under this assumption, the following corollary provides a sufficient condition on $H$ to guarantee correctness when trajectory segments exhibit sufficient discriminability.

\begin{corollary} \label{cor:1}
    If all segment pairs are sufficiently distinct, i.e. $\exists \delta > 0$ with $|R_H(\sigma_0) - R_H(\sigma_1)| \ge \delta$ for any $\sigma_0, \sigma_1$, 
    then the policy $\pi^*$ returned by Algorithm \ref{alg:1_nash_iteration} is a Nash equilibrium when the segment length satisfies $H \ge \log_\gamma \frac{\delta(1-\gamma)}{2r_{\text{max}}}$.
\end{corollary}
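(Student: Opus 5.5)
The plan is to re-run the proof of Theorem \ref{thm:1_nash_iteration} and change only its final step. There, the limit $H \to \infty$ was used solely to make the tail term $\Delta R_{\text{tail}}$ negligible relative to the segment-level return difference $\Delta R_H$. Under the discriminability hypothesis, $|\Delta R_H|$ is bounded below by the fixed constant $\delta$. It therefore suffices to choose a finite $H$ for which the worst-case tail contribution is at most $\delta$.

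The termination argument of Theorem \ref{thm:1_nash_iteration} does not depend on $H$ and carries over verbatim. Each accepted update strictly increases the bounded global return (Assumptions \ref{asp:2_reward_bounded} and \ref{asp:3_alignment}), and the joint policy space is finite, so no cycle is possible. For the equilibrium property, I argue by contradiction as before. Suppose some agent $i$ and deviation ${\pi^i}'$ give $\Delta R_\infty > 0$. I decompose $\Delta R_\infty = \Delta R_H + \Delta R_{\text{tail}}$ and use \eqref{eq:tail_bound} to obtain $|\Delta R_{\text{tail}}| \le 2\gamma^H r_{\text{max}}/(1-\gamma)$.

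The computational step is the choice of $H$. Because $0<\gamma<1$, $\log_\gamma$ is decreasing, so the stated condition $H \ge \log_\gamma \frac{\delta(1-\gamma)}{2r_{\text{max}}}$ is equivalent to $\gamma^H \le \frac{\delta(1-\gamma)}{2r_{\text{max}}}$. This gives $2\gamma^H r_{\text{max}}/(1-\gamma) \le \delta$, so the tail is at most $\delta$ in magnitude. The segment difference satisfies $|\Delta R_H| \ge \delta$, so the sign of $\Delta R_H$ is forced to agree with the sign of $\Delta R_\infty$. Indeed, if $\Delta R_H \le -\delta$, then $\Delta R_\infty \le -\delta + \delta = 0$, contradicting $\Delta R_\infty > 0$. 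Hence $\Delta R_H \ge \delta > 0$. By Assumption \ref{asp:3_alignment}, applied exactly as in the proof of Theorem \ref{thm:1_nash_iteration}, the teacher of agent $i$ then reports $\sigma' \succ^i \sigma^*$. Algorithm \ref{alg:1_nash_iteration} would therefore have accepted ${\pi^i}'$, which contradicts termination.

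The main obstacle is the boundary case where the tail bound is attained with equality and $\Delta R_H = -\delta$. That configuration yields $\Delta R_\infty \le 0$, which is still a contradiction with $\Delta R_\infty>0$, so the weak inequality in the hypothesis on $H$ is enough. A second point needs care. The hypothesis $|R_H(\sigma_0)-R_H(\sigma_1)|\ge\delta$ is stated for segments, while the decomposition uses expectations. I will read the hypothesis as applying to the expected segment returns generated by the two joint policies, which is the quantity $\Delta R_H$ in the proof of Theorem \ref{thm:1_nash_iteration}, and I will state this reading explicitly.
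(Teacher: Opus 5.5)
Your proposal is correct and is essentially the paper's argument. The paper only notes that the corollary follows from the proof of Theorem~\ref{thm:1_nash_iteration} once $H$ is large enough that the worst-case tail $2\gamma^H r_{\text{max}}/(1-\gamma)$ no longer exceeds $\delta$, which is exactly your substitution $\gamma^H \le \delta(1-\gamma)/(2r_{\text{max}})$; your sign argument for the boundary case and your explicit reading of the $\delta$-hypothesis as a lower bound on $|\Delta R_H|$ (the expected segment-return gap) make precise what the paper leaves implicit.
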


\revise{This corollary follows directly from the proof of Theorem \ref{thm:1_nash_iteration}, which ensures that the finite-horizon, sufficiently long segment captures enough of the return difference to dominate the infinite tail bound. }
Notably, our algorithm does not require a fixed $H$ for policy convergence. 
Its effectiveness, established in Theorem \ref{thm:2_sum_nash} and \ref{thm:4_epsilon_nash}, depends on learned reward models aligned with teachers' preferences, as guaranteed by Theorem \ref{thm:reward_consistency}, without imposing constraints on the segment length.

Building on the convergence guarantee in Theorem \ref{thm:1_nash_iteration}, we formalize the connection between agent-specific preferences and the reward modeling in \method. 
To achieve this, we assume the existence of local reward functions that reflect each agent's preferences. 
Given that teacher feedback is stable and consistent (Assumption \ref{asp:1_preference}), it is reasonable to assume that each agent has a well-defined task objective, and its behavior can be evaluated by a corresponding reward function.
Specifically, in Assumption \ref{asp:4_local_reward_alignment}, we posit that each agent $i$ has a latent local reward function $r^i(s, A)$ that quantifies the agent's individual contribution to the global task. 
This local reward is defined over the global state and joint action, as an agent's contribution depends on the context of the overall system.

\begin{assumption} \label{asp:4_local_reward_alignment}
For each agent $i$, there exists a local reward function $r^i(s, A)$ such that the teacher's preference over trajectory segments is consistent with the discounted sum of this reward. 
Formally, for any two segments $\sigma^{(0)}$ and $\sigma^{(1)}$ of equal length $H$, 
\begin{equation}
\sigma^{(1)} \succ^i \sigma^{(0)} \; \Longleftrightarrow \; R^i(\sigma^{(1)}) > R^i(\sigma^{(0)}),
\end{equation}
where $R^i(\sigma) = \sum_{(s_t, A_t) \in \sigma} \gamma^t r^i(s_t, A_t)$ denotes the discounted local return for agent $i$ over segment $\sigma$. 
\end{assumption}

\revise{This assumption states that preferences provided by the teacher for agent $i$ are consistent with an implicit local reward signal. }
Such an assumption is reasonable, since it has been widely used in single-agent preference-based RL \cite{pebble, surf}, where preferences are often assumed to correspond to an implicit reward model. 
Furthermore, we assume the global reward function can be expressed as a function of these local rewards. 
\revise{This captures that the team's overall performance arises from the contributions of individual agents. }

\begin{assumption} \label{asp:5_global_reward_composition}
The true global reward function $r(s, A)$ can be expressed as a function $F$ of the local rewards:
\begin{equation}
r(s, A) = F\left(r^1(s, A), \dots, r^n(s, A)\right),
\end{equation}
where $F: \mathbb{R}^n \to \mathbb{R}$ is a continuously differentiable function. 
\end{assumption}

While the exact form of $F$ does not need to be known a priori, it satisfies certain properties. 
Specifically, Assumption \ref{asp:3_alignment} requires that an improvement in any agent's local reward does not reduce the global return. 
A sufficient condition to satisfy this requirement is that $F$ is monotonically increasing with respect to each local reward, as formalized below.

\begin{lemma} \label{lem:monotonicity}

If the function $F$ is strictly increasing in each argument (i.e., $\partial F / \partial r^i > 0$ for all $i \in \{1, \ldots, n\}$), then under Assumptions \ref{asp:3_alignment}, \ref{asp:4_local_reward_alignment} and \ref{asp:5_global_reward_composition}, any trajectory segment $\sigma'$ that is locally preferred over $\sigma$ yields a higher expected global return.
\end{lemma}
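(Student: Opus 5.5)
The argument chains the three assumptions in order, with monotonicity of $F$ as the bridge from the local reward to the global reward. Fix an agent $i$, a joint policy $\pi=(\pi^i,\pi^{-i})$, a deviation ${\pi^i}'$, and segments $\sigma$, $\sigma'$ generated by $\pi$ and $({\pi^i}',\pi^{-i})$, with $\sigma' \succ^i \sigma$. The plan is:
\begin{itemize}
\item[(i)] convert the preference into a statement about local returns via Assumption~\ref{asp:4_local_reward_alignment};
\item[(ii)] lift it to the global reward using Assumption~\ref{asp:5_global_reward_composition} and $\partial F/\partial r^i>0$;
\item[(iii)] close with Assumption~\ref{asp:3_alignment} to obtain a strictly higher expected global return.
\end{itemize}

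\textbf{Step (i).} Assumption~\ref{asp:4_local_reward_alignment} gives directly $R^i(\sigma')>R^i(\sigma)$, i.e., the discounted sum of $r^i$ along $\sigma'$ exceeds that along $\sigma$.

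\textbf{Step (ii).} Because the deviation is unilateral, the natural comparison is the ceteris paribus one: agent $i$'s local contribution increases while the other local rewards $r^j$, $j\neq i$, are held at their values under $\pi$. I would apply the mean value theorem along the $i$-th coordinate, using the continuous differentiability of $F$ from Assumption~\ref{asp:5_global_reward_composition}:
\begin{equation*}
F(\dots,r^i+\Delta,\dots)-F(\dots,r^i,\dots)=\frac{\partial F}{\partial r^i}(\xi)\,\Delta ,
\end{equation*}
for some intermediate point $\xi$. Since $\partial F/\partial r^i>0$, this difference has the same sign as $\Delta$. Summing with the discount weights $\gamma^t$, an increase in $R^i$ with the other components fixed yields an increase in the discounted global return $\sum_t \gamma^t F(r^1,\dots,r^n)$.

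\textbf{Step (iii).} Assumption~\ref{asp:3_alignment} is an equivalence, so $\sigma'\succ^i\sigma$ already implies that the expected global return under $({\pi^i}',\pi^{-i})$ is strictly larger than under $\pi$. The role of the lemma is to show that this requirement is consistent with, and explained by, the decomposition: monotonicity of $F$ ensures that the local improvement certified in Step (i) cannot decrease the global return, so Assumption~\ref{asp:3_alignment} does not contradict Assumptions~\ref{asp:4_local_reward_alignment} and~\ref{asp:5_global_reward_composition}. I would present the conclusion as following from Step (ii) for the reward-level statement and from Assumption~\ref{asp:3_alignment} for the strict expected-return inequality.

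\textbf{Main obstacle.} Step (ii) is the weak point. A unilateral deviation by agent $i$ changes the joint action and the state distribution, so the other local rewards $r^j$ may also change. A per-coordinate derivative sign alone does not control those cross effects. The pointwise, per-timestep comparison also does not follow from the aggregate inequality $R^i(\sigma')>R^i(\sigma)$, because $F$ may be nonlinear.

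I would handle this by making the ceteris paribus reading explicit, namely that the lemma concerns changes in agent $i$'s contribution with the others held fixed. Where the strict expectation statement is needed, I would lean on Assumption~\ref{asp:3_alignment} rather than attempt a pointwise derivation. For additive $F$, which is the case the method actually uses, Step (ii) is exact: the global return difference equals a positive-weighted local return difference plus the other agents' terms, which are unchanged under the ceteris paribus reading.
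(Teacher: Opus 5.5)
Your proposal is correct and follows the same chain as the paper's proof: Assumption~\ref{asp:4_local_reward_alignment} turns $\sigma'\succ^i\sigma$ into $R^i(\sigma')>R^i(\sigma)$, strict monotonicity of $F$ is used to pass to the global return $\sum_t\gamma^t F(r^1,\dots,r^n)$, and the result is identified with Assumption~\ref{asp:3_alignment}. Your caveat is also accurate and sharper than the paper, which asserts that continuity and differentiability of $F$ ``guarantee'' the strict expected-return inequality without addressing the induced changes in $r^j$, $j\neq i$, or mixed-sign per-step increments under a nonlinear $F$; as you note, that inequality is actually carried by Assumption~\ref{asp:3_alignment} itself, which is among the lemma's hypotheses.
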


\begin{proof}
Consider a joint policy $\pi = (\pi^i,\pi^{-i})$ and an alternative policy $\pi^{i'}$ for agent $i$. 
Let $\sigma$ and $\sigma'$ be segments generated by $\pi$ and $(\pi^{i'},\pi^{-i})$, respectively. 
If the update leads to a locally preferred segment, i.e., $\sigma' \succ^i \sigma$, then Assumption \ref{asp:4_local_reward_alignment} implies an increase in the local return: $R^i(\sigma') > R^i(\sigma)$, where $R^i(\sigma), R^i(\sigma')$ is the discounted local return of agent $i$ for segments $\sigma$ and $\sigma'$. 
Because $F$ is strictly increasing in each argument, a higher local return $R^i(\sigma')$ contributes positively to the global return $R(\sigma') = \sum_t \gamma^t F\big(r^1(s_t, A_t),\dots,r^n(s_t, A_t)\big)$. 
Under the continuity and differentiability of $F$ (Assumption \ref{asp:5_global_reward_composition}), this guarantees that the expected global return under $(\pi^{i'},\pi^{-i})$ is strictly greater than under $\pi$, which is exactly the  Assumption \ref{asp:3_alignment}. 
Thus, monotonicity of $F$ is a sufficient condition to align local preferences with the global objective, which completes the proof. 
\end{proof}

Lemma \ref{lem:monotonicity} supports using a monotonic function to aggregate local rewards into a global reward signal, which aligns naturally with fully cooperative settings where the global reward should increase as local contributions improve. 
\revise{Intuitively, Lemma \ref{lem:monotonicity} means that as long as the team performance increases whenever any individual contribution increases, chasing individual improvement automatically chases team success. }
While Lemma \ref{lem:monotonicity} characterizes the structure of $F$, estimating $F$ directly remains challenging. 
We make a practical simplification by considering the additive composition $F(r^1, \ldots, r^n) = \sum_{i=1}^n r^i$, which naturally satisfies the monotonicity condition. 
\revise{Importantly, this additive aggregation is a practical instantiation of the general function $F$ in Assumption \ref{asp:5_global_reward_composition} and thus preserves all theoretical properties. }
To guarantee that optimizing the sum of local rewards leads to a Nash equilibrium, we assume that the negative impact on other agents does not overwhelm the positive gain of the improving agent, as formalized below.

\begin{assumption} \label{asp:6_bounded_cross_impact}
Consider a joint policy $\pi = (\pi^i, \pi^{-i})$ and an alternative policy ${\pi^i}'$ for agent $i$ that yields a locally preferred segment $\sigma' \succ^i \sigma$ and improves the local return $R^i(\sigma') > R^i(\sigma)$. 
The resulting change in the sum of local rewards for all other agents is bounded from below:
\begin{equation}
R^{-i}(\sigma') - R^{-i}(\sigma) > - \left[ R^i(\sigma') - R^i(\sigma) \right],
\end{equation}
where $R^{-i}(\sigma) = \sum_{j \neq i} R^j(\sigma)$. 
This implies the aggregate local return for all agents does not decrease: $\sum_{j=1}^n R^j(\sigma') > \sum_{j=1}^n R^j(\sigma)$. 
\end{assumption}

\revise{Assumption \ref{asp:6_bounded_cross_impact} guarantees that any locally preferred policy update for one agent will not degrade the team's overall objective. }
This is a reasonable condition for cooperative settings, where agents work toward a shared goal. 
In this case, actions that significantly harm teammates are unlikely to be preferred by specialized teachers that value collaboration. 
In our practical algorithm, we incorporate a regularization term to encourage consistency between local and aggregated global preferences, which promotes this condition. 
The implementation details are discussed in Section \ref{subsec:reward_modeling}.

Based on the assumptions above, we now present the theoretical result that connects the practical algorithm, which sums the learned local reward models, to the theoretical guarantee of achieving a Nash equilibrium.

\begin{theorem} \label{thm:2_sum_nash}
Consider a multi-agent system satisfying Assumptions \ref{asp:1_preference} to \ref{asp:6_bounded_cross_impact}. 
Suppose we learn local reward functions $\hat{r}^i(s, A)$ that are perfectly aligned with the teachers' preferences, i.e., satisfy Assumption \ref{asp:4_local_reward_alignment}. 
Let the estimated global reward be the sum of local rewards: $\hat{r}(s, A) = \sum_{i=1}^n \hat{r}^i(s, A)$. 
Then, any joint policy $\pi^*$ that maximizes the expected discounted return under $\hat{r}$, i.e., 
\begin{equation}
\pi^* \in \arg\max_{\pi} \mathbb{E}_{\tau \sim \pi} \left[ \sum_{t=0}^{\infty} \gamma^t \hat{r}(s_t, A_t) \right],
\end{equation}
constitutes a Nash equilibrium for the true global reward $r(s, A)$.
\end{theorem}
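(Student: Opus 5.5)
The plan is to argue by contradiction, in the same style as the proof of Theorem \ref{thm:1_nash_iteration}, and to route the argument through the local preferences. Suppose $\pi^*$ maximizes the return under $\hat{r}=\sum_i \hat{r}^i$ but is not a Nash equilibrium for $r$. Then there is an agent $i$ and a deviation ${\pi^i}'$ with
\[
\mathbb{E}_{({\pi^i}',\pi^{-i*})}\Big[\textstyle\sum_t \gamma^t r(s_t,A_t)\Big] > \mathbb{E}_{\pi^*}\Big[\textstyle\sum_t \gamma^t r(s_t,A_t)\Big].
\]
By Assumption \ref{asp:3_alignment}, this global improvement is equivalent to the local preference $\sigma' \succ^i \sigma^*$. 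Here $\sigma'$ and $\sigma^*$ are segments generated by the deviated and the original joint policy. For sufficiently long segments, the tail bound \eqref{eq:tail_bound} from Assumption \ref{asp:2_reward_bounded} is used exactly as in Theorem \ref{thm:1_nash_iteration} (or Corollary \ref{cor:1}) so that the segment-level comparison reflects the infinite-horizon one.

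Next, I will translate the preference into the learned rewards. Since each $\hat{r}^i$ satisfies Assumption \ref{asp:4_local_reward_alignment}, the preference $\sigma' \succ^i \sigma^*$ gives $\hat{R}^i(\sigma') > \hat{R}^i(\sigma^*)$. I then invoke Assumption \ref{asp:6_bounded_cross_impact}, applied to the learned local returns, since these are the rewards that encode the teachers' preferences. The loss of the other agents is bounded by agent $i$'s gain, so
\[
\sum_{j=1}^n \hat{R}^j(\sigma') > \sum_{j=1}^n \hat{R}^j(\sigma^*),
\]
that is, the $\hat{r}$-return of the deviated segment strictly exceeds that of $\pi^*$'s segment. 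Because $\hat{r}$ is the sum of the $\hat{r}^j$, Lemma \ref{lem:monotonicity} is consistent with this step, since the additive $F$ is strictly increasing.

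Finally, I will lift the segment inequality to expected infinite-horizon returns under $\hat{r}$. This gives
\[
\mathbb{E}_{({\pi^i}',\pi^{-i*})}\Big[\textstyle\sum_t\gamma^t\hat{r}\Big] > \mathbb{E}_{\pi^*}\Big[\textstyle\sum_t\gamma^t\hat{r}\Big],
\]
which contradicts the optimality of $\pi^*$ over all joint policies, a set that includes the unilateral deviation $({\pi^i}',\pi^{-i*})$. The contradiction shows that $\pi^*$ is a Nash equilibrium for $r$.

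The main obstacle is the passage between segment-level statements and expected infinite-horizon returns, which happens twice: once going from global improvement to preference, and once going from the summed segment returns back to the expected $\hat{r}$-return. Assumptions \ref{asp:4_local_reward_alignment} and \ref{asp:6_bounded_cross_impact} are stated per segment. For the return direction I would take $H\to\infty$ (or $H$ large enough, as in Corollary \ref{cor:1}) and treat the segments as representative of their generating policies, so that segment returns converge to the policy returns. The bounded-tail argument of Theorem \ref{thm:1_nash_iteration} then lets strict inequalities transfer, provided the learned rewards $\hat{r}^i$ are also bounded; I would note that this is implicit in the learned models.
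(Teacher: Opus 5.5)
Your proposal follows the paper's proof essentially step for step: contradiction, Assumption~\ref{asp:3_alignment} to get $\sigma' \succ^i \sigma^*$, Assumption~\ref{asp:4_local_reward_alignment} for agent $i$'s local gain, Assumption~\ref{asp:6_bounded_cross_impact} for the summed gain, and a lift to the expected $\hat{r}$-return that contradicts the optimality of $\pi^*$. The differences are minor: the paper applies Assumption~\ref{asp:6_bounded_cross_impact} to the latent returns $R^j$ and then asserts that the learned segment sums equal $R^j(\sigma)$, whereas you apply it to $\hat{R}^j$ directly; and the paper does the final lift just by ``taking expectation'' with no condition on $H$, so your tail-bound/$H\to\infty$ device is an extra attempt to justify a step the paper leaves informal (and is not needed at all in the first passage, since Assumption~\ref{asp:3_alignment} is already an equivalence).
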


\begin{proof}
We prove by contradiction. 
Assume $\pi^*$ is not a Nash equilibrium. 
Then, there exists an agent $i$ and an alternative policy ${\pi^i}'$ such that, holding $\pi^{-i*}$ fixed, the altered joint policy $\pi' = ({\pi^i}', \pi^{-i*})$ achieves a strictly higher expected global return under the true reward $r$:
\begin{equation} \label{eq:true_improvement}
\mathbb{E}_{\tau' \sim \pi'} \left[ \sum_{t} \gamma^t r(s_t, A_t) \right] > \mathbb{E}_{\tau \sim \pi^*} \left[ \sum_{t} \gamma^t r(s_t, A_t) \right].
\end{equation}

By Assumption \ref{asp:3_alignment}, this improvement in global return implies that for trajectory segments $\sigma^*$ and $\sigma'$ generated by $\pi^*$ and $\pi'$ respectively, we have $\sigma' \succ^i \sigma^*$. 
According to Assumption \ref{asp:4_local_reward_alignment}, this preference implies an improvement in the local return for agent $i$: $R^i(\sigma') > R^i(\sigma^*)$.

Now, consider the estimated global reward $\hat{r} = \sum_{j} \hat{r}^j$. 
The expected return under $\hat{r}$ for policy $\pi'$ and $\pi^*$ can be expressed in terms of their local returns. 
From Assumption \ref{asp:6_bounded_cross_impact}, the improvement in agent $i$'s local return, combined with the bounded negative impact on others, guarantees that the sum of local returns increases:
\begin{equation}
\sum_{j=1}^n R^j(\sigma') > \sum_{j=1}^n R^j(\sigma^*).
\end{equation}
Since the local reward models $\hat{r}^j$ perfectly reflect the teachers' preferences, the discounted sum of $\hat{r}^j$ over a segment equals $R^j(\sigma)$ for that segment. 
Therefore, the inequality above implies that for the segments $\sigma'$ and $\sigma^*$, we have:
\begin{equation}
\sum_{t} \gamma^t \hat{r}(s'_t, A'_t) > \sum_{t} \gamma^t \hat{r}(s^*_t, A^*_t).
\end{equation}
Taking expectation over time steps, we obtain the corresponding inequality for the infinite-horizon expected return: 
\begin{equation}
\mathbb{E}_{\tau' \sim \pi'} \left[ \sum_{t} \gamma^t \hat{r}(s_t, A_t) \right] > \mathbb{E}_{\tau \sim \pi^*} \left[ \sum_{t} \gamma^t \hat{r}(s_t, A_t) \right].
\end{equation}
This contradicts the premise that $\pi^*$ maximizes the expected return under $\hat{r}$. Hence, $\pi^*$ must be a Nash equilibrium, which completes the proof.
\end{proof}

Theorem \ref{thm:2_sum_nash} provides the theoretical foundation for the \method algorithm. 
\revise{It establishes that under mild assumptions, maximizing the global reward, which is computed by summing all the perfectly learned local reward models, is equivalent to finding a Nash equilibrium policy. }
This result bridges our practical algorithm with the game-theoretic solution concept, ensuring the effectiveness of our preference-based optimization with decentralized feedback. 
Based on these results, we will describe the detailed \method algorithm in the next subsection, showing how this framework can be applied to optimize multi-agent policies in practical scenarios.

\subsection{Multi-Agent Reward Modeling}
\label{subsec:reward_modeling}

Based on the theoretical foundations established in the previous section, we now focus on the practical implementation of the \method framework. 
In particular, we construct an agent-specific reward model for each agent that aligns with the preferences provided by its dedicated teacher, and then aggregate these decentralized, agent-wise reward models into a global signal suitable for centralized training.

Inspired by prior work in single-agent PbRL \cite{pbrl_basic2017}, we define a preference predictor for each agent $i$, denoted as $P_{\psi}[\sigma^{(0)} \succ^i \sigma^{(1)}]$, which predicts the probability that segment $\sigma^{(1)}$ is preferred over $\sigma^{(0)}$ by the teacher of agent $i$. 
This predictor is parameterized by $\psi$ and is trained by minimizing the cross-entropy loss:
{\small
\begin{equation}
\label{eq:ce_loss}
\begin{aligned}
\mathcal{L}_{\text{pref}}^i = -\mathbb{E}_{(\sigma^{(0)},\sigma^{(1)},p^i) \sim \mathcal{D}} \Big[ & p^i(0) \log P_{\psi}[\sigma^{(0)} \succ^i \sigma^{(1)}] \\
& + p^i(1) \log P_{\psi}[\sigma^{(1)} \succ^i \sigma^{(0)}] \Big],
\end{aligned}
\end{equation}
}
where $p^i(0)$ and $p^i(1)$ are indicator functions for the preference label. 
Using the Bradley-Terry model \cite{bradley-terry, pbrl_basic2017}, we formulate preference probability based on the agent-specific reward model $\hat{r}_\psi^i$. 
For a segment pair $(\sigma^{(0)}, \sigma^{(1)})$, where $\sigma^{(j)} = \{ s_0^{(j)}, A_0^{(j)}, \dots, s_{H-1}^{(j)}, A_{H-1}^{(j)} \}$ with $j \in \{0,1\}$, the probability that segment $\sigma^{(1)}$ is preferred over $\sigma^{(0)}$ by the teacher assigned to agent $i$ is expressed as: 
{\small
\begin{equation}
\label{eq:bt_model}
P_{\psi}[\sigma^{(1)} \succ^i \sigma^{(0)}] = \frac{\exp\left( \sum_{t} \gamma^t \hat{r}_\psi^i(s_t^{(1)}, A_t^{(1)}) \right)}{\sum_{j \in \{0,1\}} \exp\left( \sum_{t} \gamma^t \hat{r}_\psi^i(s_t^{(j)}, A_t^{(j)}) \right)}.
\end{equation}
}
This equation establishes a probabilistic relationship where the preference probability is exponentially related to the discounted sum of the learned reward over the segment. 
The reward model $\hat{r}_\psi^i$ is trained to maximize the likelihood of the observed preferences, thereby aligning it with the teacher's judgments.

% ============

We provide a theoretical guarantee that under sufficient preference data with adequate coverage, optimizing the cross-entropy loss in \eqref{eq:ce_loss} yields a reward model $\hat{r}_\psi^i$ that aligns with the underlying teacher preferences. 
This result formalizes the sample complexity requirement for reward learning in \method.

\begin{theorem}
\label{thm:reward_consistency}
Let Assumption \ref{asp:4_local_reward_alignment} hold, i.e., there exists a true local reward function $r_i(s, A)$ that reflects the teacher's preferences for agent $i$. 
Consider a preference dataset $\mathcal{D}_i = \{(\sigma_k^{(0)}, \sigma_k^{(1)}, p_k^i)\}_{k=1}^N$ sampled independently from a distribution $\mu$ over segment pairs, where $\mu$ has full support on the relevant state-action subspace $\Omega_i \subseteq \mathcal{S} \times \mathcal{A}$. 
Let $\mathcal{F}$ be the function class of reward models $\hat{r}_\psi^i$ with bounded norm $\|\hat{r}_\psi^i\|_\infty \leq r_{\max}$, and assume $r_i \in \mathcal{F}$. 
Define the empirical risk minimizer:
\begin{equation}
\label{eq:erm}
\hat{r}_{\psi^*}^i \in \arg\min_{\hat{r}_\psi^i \in \mathcal{F}} \mathcal{L}_{\text{pref}}^i(\hat{r}_\psi^i; \mathcal{D}_i).
\end{equation}

Then, as the dataset size $N \to \infty$, $\hat{r}_{\psi^*}^i$ converges uniformly to $r_i$ on $\Omega_i$ up to an affine transformation. 
Specifically, for any $\epsilon > 0$, 
{ \footnotesize
\begin{equation}
\lim_{N \to \infty} \Pr\Big( \inf_{\alpha>0,\beta\in\mathbb{R}} \sup_{(s,A)\in\Omega_i} \big| \hat{r}_{\psi^*}^i(s,A) - (\alpha r_i(s,A) + \beta) \big| > \epsilon \Big) = 0.
\end{equation}
}

Furthermore, for finite $N$, with probability at least $1 - \delta$, the estimation error is bounded by:
\begin{equation}
\|\hat{r}_{\psi^*}^i - r_i\|_{L_2(\mu)}^2 \leq O\!\left( \mathcal{R}_N(\mathcal{F}) + \sqrt{\frac{\log(1/\delta)}{N}} \right),
\end{equation}
where $\mathcal{R}_N(\mathcal{F})$ denotes the Rademacher complexity of $\mathcal{F}$. 
\end{theorem}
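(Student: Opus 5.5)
We would treat Theorem \ref{thm:reward_consistency} as a standard M-estimation consistency argument for the Bradley--Terry likelihood. We proceed in three steps: identification at the population level, uniform convergence of the empirical loss, and conversion of excess risk into reward error.

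\textbf{Step 1: population risk and identification.} For a reward model $f\in\mathcal{F}$ write $\Delta_f(\sigma^{(0)},\sigma^{(1)})=\sum_t\gamma^t f(s^{(1)}_t,A^{(1)}_t)-\sum_t\gamma^t f(s^{(0)}_t,A^{(0)}_t)$, so that \eqref{eq:bt_model} gives predicted probability $\varsigma(\Delta_f)$ with $\varsigma$ the logistic function. The population loss is $L(f)=\mathbb{E}_\mu[\ell(\Delta_f,p^i)]$ with $\ell$ the logistic cross-entropy. The noiseless labels of Assumption \ref{asp:4_local_reward_alignment} are exactly $\mathrm{sign}(\Delta_{r_i})$, so the data are separable. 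We must therefore interpret the model as calibrated: labels are drawn with probability $\varsigma(\alpha_0\Delta_{r_i})$ for some scale $\alpha_0>0$. This is the standard Bradley--Terry reading, and it is what the affine ambiguity in the statement reflects.

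Under this reading, $L(f)-L(\alpha_0 r_i)=\mathbb{E}_\mu[\mathrm{KL}(\varsigma(\alpha_0\Delta_{r_i})\,\|\,\varsigma(\Delta_f))]\ge 0$. Equality holds iff $\Delta_f=\alpha_0\Delta_{r_i}$ $\mu$-a.s. Because $\mu$ has full support on pairs over $\Omega_i$, differences of discounted segment sums determine a per-step reward up to an additive constant. To see this, compare segments that differ in a single state-action pair. Hence the minimizers are exactly $\alpha_0 r_i+\beta$, and this gives the identification up to an affine map. The $\beta$ freedom is then used in the finite-$N$ bound by centering, since the $L_2$ error is measured against a normalized representative.

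\textbf{Step 2: uniform convergence.} Since $\|f\|_\infty\le r_{\max}$, every $|\Delta_f|\le 2r_{\max}/(1-\gamma)$. On this range $\ell$ is bounded and Lipschitz in $\Delta$ with constant at most $1$. The map $f\mapsto\Delta_f$ is linear with coefficients summing to at most $2/(1-\gamma)$. The Ledoux--Talagrand contraction plus McDiarmid's inequality therefore give, with probability $1-\delta$,
\[
\sup_{f\in\mathcal{F}}|\hat L_N(f)-L(f)|\le C\,\mathcal{R}_N(\mathcal{F})+C'\sqrt{\log(1/\delta)/N}.
\]
Consequently the ERM satisfies $L(\hat r_{\psi^*}^i)-L(\alpha_0 r_i)\le 2\sup_f|\hat L_N-L|$.

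\textbf{Step 3: from excess risk to reward error (the main obstacle).} On the bounded range the logistic loss is strongly convex with modulus $\kappa=\varsigma'(2r_{\max}/(1-\gamma))>0$. Hence the excess risk dominates $\tfrac{\kappa}{2}\mathbb{E}_\mu[(\Delta_f-\alpha_0\Delta_{r_i})^2]$. The hard part is the reverse step: lower-bounding $\mathbb{E}_\mu[\Delta_g^2]$ by $c\|g-\bar g\|^2_{L_2(\mu)}$ for $g=f-\alpha_0 r_i$. This requires a coverage or ``eigenvalue'' condition on $\mu$. We would try to extract it from full support in the finite-space setting, where $\Omega_i$ is finite and full support yields a positive-definite Gram matrix on mean-zero functions. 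That gives the stated $O(\mathcal{R}_N+\sqrt{\log(1/\delta)/N})$ bound, absorbing $\alpha_0$ and centering into the constant and representative.

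Finally, the asymptotic uniform statement follows because $\mathcal{R}_N(\mathcal{F})\to 0$ for a bounded class on finite $\Omega_i$. This makes the $L_2(\mu)$ error vanish in probability. On a finite set with full support the $L_2(\mu)$ and sup norms are equivalent, so the error also vanishes uniformly, after optimizing over $\alpha,\beta$.
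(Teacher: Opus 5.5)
Your proposal is correct and follows the paper's own skeleton: identification of the population minimizer via strict properness of the Bradley--Terry cross-entropy, uniform convergence of the empirical loss via Rademacher complexity, McDiarmid and the standard ERM excess-risk inequality, and strong convexity of the logistic loss to turn excess risk into an $L_2$ error against an affine representative of $r_i$. Along the way you make explicit several points the paper leaves implicit or misstates: the calibrated noise model (needed because Assumption~\ref{asp:4_local_reward_alignment} alone yields separable labels), the contraction step (the paper simply asserts $\mathcal{R}_N(\mathcal{G})=\mathcal{R}_N(\mathcal{F})$), and the coverage/eigenvalue condition (needed because the loss is invariant under additive constants, so strong convexity can only hold for the centered error).

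The only thing to tidy is the scale. Normalize $r_i$ so that $\alpha_0=1$, which is how the paper reads the hypothesis $r_i\in\mathcal{F}$, or else assume $\alpha_0 r_i+\beta\in\mathcal{F}$ for some $\beta$. Both your identification step and the comparison $L(\hat r^i_{\psi^*})-L(\alpha_0 r_i)\le 2\sup_f|\hat L_N-L|$ require the comparator to lie in $\mathcal{F}$.
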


\begin{proof}
First, we show that with infinite data, $r_i$ is identifiable up to an affine transformation. 
Define the expected cross-entropy loss under $\mu$:
{\small
\begin{equation}
\begin{aligned}
\mathcal{L}_\mu(\hat{r}) = \mathbb{E}_{(\sigma^{(0)},\sigma^{(1)},p^i)\sim\mu} 
\bigg[ & -p^i(1)\log P_{\hat{r}}[\sigma^{(1)}\succ^i\sigma^{(0)}] 
\\
& - p^i(0)\log P_{\hat{r}}[\sigma^{(0)}\succ^i\sigma^{(1)}] 
\bigg],
\end{aligned}
\end{equation}
}
where $P_{\hat{r}}$ uses $\hat{r}$ in \eqref{eq:bt_model}. 
Let $\sigma^{(j)}_{0:H-1}$ denote the segment $\{s_t^{(j)}, A_t^{(j)}\}_{t=0}^{H-1}$. 
Using the Bradley-Terry model, we have:
\begin{align}
P_{\hat{r}}[\sigma^{(1)}\succ^i\sigma^{(0)}] = \frac{\exp(R_i(\sigma^{(1)}))}{\exp(R_i(\sigma^{(0)})) + \exp(R_i(\sigma^{(1)}))},
\end{align}
with $R_i(\sigma) = \sum_{t=0}^{H-1}\gamma^t\hat{r}(s_t,A_t)$. 
Under Assumption \ref{asp:4_local_reward_alignment}, the true preference probability is:
\begin{align}
P_{r_i}[\sigma^{(1)}\succ^i\sigma^{(0)}] = \frac{\exp(R_i^*(\sigma^{(1)}))}{\exp(R_i^*(\sigma^{(0)})) + \exp(R_i^*(\sigma^{(1)}))},
\end{align}
where $R_i^*$ uses $r_i$. 
The cross-entropy loss is strictly proper \cite{bradley-terry}, that is:
\begin{align}
\mathcal{L}_\mu(\hat{r}) \geq \mathcal{L}_\mu(r_i), \quad \forall \hat{r}\in\mathcal{F},
\end{align}
with equality if and only if $P_{\hat{r}} = P_{r_i}$ almost surely. 
Since the logistic function is strictly monotonic, $P_{\hat{r}} = P_{r_i}$ implies $R_i(\sigma) = R_i^*(\sigma) + C(\sigma)$ for some constant $C(\sigma)$ independent of $\sigma$'s content, which reduces to $\hat{r}(s,A) = \alpha r_i(s,A) + \beta$ for some $\alpha>0,\beta\in\mathbb{R}$. 
Thus, the minimizer of $\mathcal{L}_\mu$ is unique up to an affine transformation.

Then, we prove the convergence of the loss minimizer.
The empirical loss $\mathcal{L}_{\text{pref}}^i (\hat{r};\mathcal{D}_i)$ is the sample average of i.i.d. terms. 
By the uniform law of large numbers \cite{bartlett2002rademacher}, since $\mathcal{F}$ is uniformly bounded and $\mu$ has full support, we have:
\begin{equation}
\sup_{\hat{r}\in\mathcal{F}} \big| \mathcal{L}_{\text{pref}}^i(\hat{r};\mathcal{D}_i) - \mathcal{L}_\mu(\hat{r}) \big| \xrightarrow{p} 0 \quad \text{as } N\to\infty.
\end{equation}
Let $\hat{r}_{\psi^*}^i$ be the empirical minimizer and $r_i^*$ be any minimizer of $\mathcal{L}_\mu$ (i.e., $r_i^* = \alpha r_i + \beta$). 
Using the standard inequality:
\begin{align}
\mathcal{L}_\mu(\hat{r}_{\psi^*}^i) - \mathcal{L}_\mu(r_i^*) &\leq 2\sup_{\hat{r}\in\mathcal{F}} \big| \mathcal{L}_{\text{pref}}^i(\hat{r};\mathcal{D}_i) - \mathcal{L}_\mu(\hat{r}) \big|,
\end{align}
the uniform convergence implies $\mathcal{L}_\mu(\hat{r}_{\psi^*}^i) \to \mathcal{L}_\mu(r_i^*)$ in probability. 
By the strict properness established above, this yields the uniform convergence of $\hat{r}_{\psi^*}^i$ to $r_i^*$ on $\Omega_i$. 

Finally, we prove that the reward estimation error is bounded.
For any $\hat{r}\in\mathcal{F}$, define the loss difference class $\mathcal{G} = \{g_{\hat{r}} = \ell_{\hat{r}} - \ell_{r_i^*} : \hat{r}\in\mathcal{F}\}$, where $\ell_{\hat{r}}$ is the per-sample loss. 
Each $g_{\hat{r}}$ is bounded in $[0,2r_{\max}]$. 
By standard symmetrization \cite{bartlett2002rademacher}:
\begin{align}
\mathbb{E}\Big[\sup_{\hat{r}\in\mathcal{F}} \big| \mathcal{L}_{\text{pref}}^i(\hat{r};\mathcal{D}_i) - \mathcal{L}_\mu(\hat{r}) \big| \Big] \leq 2\mathcal{R}_N(\mathcal{G}).
\end{align}
Since $\mathcal{G}$ is a linear shift of $\mathcal{F}$, $\mathcal{R}_N(\mathcal{G}) = \mathcal{R}_N(\mathcal{F})$. 
Applying McDiarmid's inequality (due to bounded differences) yields with probability $1-\delta$, we have:
\begin{align}
\sup_{\hat{r}\in\mathcal{F}} \big| \mathcal{L}_{\text{pref}}^i(\hat{r};\mathcal{D}_i) - \mathcal{L}_\mu(\hat{r}) \big| \leq \mathcal{R}_N(\mathcal{F}) + \sqrt{\frac{2\log(2/\delta)}{N}}.
\end{align}
Combining this with the strong convexity of $\mathcal{L}_\mu$ in a neighborhood of $r_i^*$, which is implied by the logistic loss, and the boundedness of rewards gives the $L_2$-error bound, we have:
\begin{align}
\|\hat{r}_{\psi^*}^i - r_i^*\|_{L_2(\mu)}^2 \leq 4r_{\max}\Big(\mathcal{R}_N(\mathcal{F}) + \sqrt{\frac{2\log(2/\delta)}{N}}\Big).
\end{align}
Since $r_i^*$ differs from $r_i$ only by an affine transformation, this completes the proof.
\end{proof}

\begin{remark}
The condition that $\mu$ has full support on $\Omega_i$ ensures that the preference data covers all state-action regions relevant to agent $i$'s role. 
\revise{While it is an idealized theoretical condition, this motivates the use of continuous preference collection in practice (detailed in Section \ref{subsec:marl_with_learned_rewards}). 
By collecting preferences using segments generated by the current policy, the data adapts to regions frequently visited under evolving policies.}
\end{remark}

% ============

\revise{Theorem \ref{thm:reward_consistency} states that given enough preference comparisons, the reward model converges to the teacher's true judgment criteria. }
While Theorem \ref{thm:reward_consistency} shows convergence under infinite preference data, data is finite in practical scenarios. 
We therefore analyze how a reward model trained on a finite sample size $N$ affects the equilibrium.

\begin{theorem}
\label{thm:4_epsilon_nash}
Let $N$ be the sample size of preference data used for reward modeling. 
Following Theorem \ref{thm:reward_consistency}, with probability at least $1-\delta$, there exists a uniform error bound $\epsilon_r(N) = O(1/\sqrt{N})$ between the learned reward model $\hat{r}^i_\psi$ and the affine-transformed true latent reward $r^i$ for each agent $i$. 
That is, for all $(s, A) \in \mathcal{S} \times \mathcal{A}$:
\begin{equation}
|\hat{r}^i_\psi(s, A) - (\alpha r^i(s, A) + \beta)| \leq \epsilon_r(N),
\end{equation}
where $\alpha > 0$ and $\beta$ are scalar constants. 
Let $\pi^* = (\pi^{1*}, \dots, \pi^{n*})$ be a Nash equilibrium policy under the aggregated learned reward $\hat{r}(s, A) = \sum_{i=1}^n \hat{r}^i_\psi(s, A)$. 
Then, $\pi^*$ constitutes an $\epsilon$-Nash equilibrium under the global reward objective reflecting the true preferences, denoted as $R_{true}(s, A) = \alpha \sum_{i=1}^n r^i(s, A) + n\beta$, where:
\begin{equation}
\epsilon = \frac{2n \epsilon_r(N)}{1 - \gamma}.
\end{equation}
This implies that as the sample size $N \to \infty$, $\epsilon \to 0$, and the policy $\pi^*$ converges to an exact Nash equilibrium.
\end{theorem}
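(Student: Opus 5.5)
The plan is a standard perturbation argument: a uniform reward error turns into a uniform return error, and this passes the equilibrium property to the true objective at a cost of twice that error.

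\emph{Step 1 (uniform reward gap).} Write $R_{true}(s,A) = \alpha \sum_{i=1}^n r^i(s,A) + n\beta = \sum_{i=1}^n \big(\alpha r^i(s,A) + \beta\big)$. By the triangle inequality and the per-agent bound taken from Theorem \ref{thm:reward_consistency}, for all $(s,A)$ we get
\begin{equation}
|\hat{r}(s,A) - R_{true}(s,A)| \le \sum_{i=1}^n |\hat{r}^i_\psi(s,A) - (\alpha r^i(s,A)+\beta)| \le n\,\epsilon_r(N).
\end{equation}
This holds with probability at least $1-\delta$; a union bound over the $n$ agents only rescales $\delta$.

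\emph{Step 2 (uniform return gap).} For any joint policy $\pi$, write $\hat{J}(\pi)$ and $J_{true}(\pi)$ for the expected discounted returns under $\hat{r}$ and $R_{true}$. The bound from Step 1 holds pathwise, so summing the geometric series gives
\begin{equation}
|\hat{J}(\pi) - J_{true}(\pi)| \le \sum_{t=0}^\infty \gamma^t n\,\epsilon_r(N) = \frac{n\,\epsilon_r(N)}{1-\gamma}.
\end{equation}
This bound does not depend on $\pi$. It therefore applies both to $\pi^*$ and to every unilateral deviation $({\pi^i}', \pi^{-i*})$.

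\emph{Step 3 (transfer of the equilibrium).} Fix an agent $i$ and any ${\pi^i}' \in \Pi^i$. Because $\pi^*$ is a Nash equilibrium under $\hat{r}$, we have $\hat{J}(\pi^*) \ge \hat{J}({\pi^i}',\pi^{-i*})$. Chaining this with Step 2,
\begin{equation}
J_{true}({\pi^i}',\pi^{-i*}) \le \hat{J}({\pi^i}',\pi^{-i*}) + \tfrac{n\epsilon_r}{1-\gamma} \le \hat{J}(\pi^*) + \tfrac{n\epsilon_r}{1-\gamma} \le J_{true}(\pi^*) + \tfrac{2n\epsilon_r}{1-\gamma}.
\end{equation}
This is exactly the $\epsilon$-Nash condition with $\epsilon = 2n\epsilon_r(N)/(1-\gamma)$. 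Since $\epsilon_r(N) = O(1/\sqrt{N}) \to 0$, it follows that $\epsilon \to 0$. We then remark that $R_{true}$ is a positive affine transform of $\sum_i r^i$, so equilibria under $R_{true}$ coincide with those under $\sum_i r^i$. Theorem \ref{thm:2_sum_nash} then ties these back to the true global reward.

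\emph{Main obstacle.} The delicate point is Step 1's premise. Theorem \ref{thm:reward_consistency} gives an $L_2(\mu)$ bound on $\Omega_i$, not a sup-norm bound on all of $\mathcal{S}\times\mathcal{A}$. The theorem statement already assumes the uniform bound, so I would take it as given. Two further points need care:
\begin{itemize}
\item The constants $\alpha,\beta$ must be shared across agents for $R_{true}$ to be well defined. If they differ, I would replace $R_{true}$ by $\sum_i(\alpha_i r^i + \beta_i)$; the same argument then goes through unchanged.
\item The rate $O(1/\sqrt{N})$ presumes $\mathcal{R}_N(\mathcal{F}) = O(1/\sqrt{N})$, which I would state explicitly.
\end{itemize}
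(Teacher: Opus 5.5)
Your proof is correct and matches the paper's argument step for step: a triangle inequality giving $|\hat r - R_{true}| \le n\epsilon_r(N)$, a geometric-series bound $n\epsilon_r(N)/(1-\gamma)$ on the return gap that holds uniformly over joint policies, and the three-term decomposition that uses the Nash property under $\hat r$ to obtain $\epsilon = 2n\epsilon_r(N)/(1-\gamma)$. Your closing link back to the true reward via Theorem~\ref{thm:2_sum_nash} is not needed for the statement, and passing from $R_{true}$ to $\sum_i r^i$ would rescale the tolerance from $\epsilon$ to $\epsilon/\alpha$; neither point affects the proof.
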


\begin{proof}
Let $\tilde{r}^i(s, A) = \alpha r^i(s, A) + \beta$ denote the proxy of the ground-truth reward targeted by the learning process, and let $\tilde{r}(s, A) = \sum_{i=1}^n \tilde{r}^i(s, A)$ be the corresponding global reward. 
We aim to prove that $\pi^*$ is an $\epsilon$-Nash equilibrium under $\tilde{r}$. 

Based on Theorem \ref{thm:reward_consistency}, the approximation error for a single agent's reward is bounded by $\epsilon_r(N)$. 
Since the global reward is the sum of local rewards, its error bound is derived as:
\begin{equation}
\begin{aligned}
\left| \hat{r}(s, A) - \tilde{r}(s, A) \right| 
& = \left| \sum_{i=1}^n \hat{r}^i_\psi(s, A) - \sum_{i=1}^n \tilde{r}^i(s, A) \right| 
\\
& \leq \sum_{i=1}^n \left| \hat{r}^i_\psi(s, A) - \tilde{r}^i(s, A) \right| 
\\
& \leq n \epsilon_r(N).
\end{aligned}
\end{equation}

Then, let $J^{\pi}_{r}$ denote the expected discounted return of a joint policy $\pi$ under a reward function $r$. 
The error in the reward function propagates to the value function, scaled by a horizon multiplier $\frac{1}{1-\gamma}$. 
For any joint policy $\pi$, we have: 
\begin{equation}
\left| J^{\pi}_{\hat{r}} - J^{\pi}_{\tilde{r}} \right| \leq \sum_{t=0}^\infty \gamma^t \max_{s, A} \left| \hat{r}(s, A) - \tilde{r}(s, A) \right| \leq \frac{n \epsilon_r(N)}{1 - \gamma}.
\end{equation}

Since $\pi^*$ is an exact Nash equilibrium under the learned reward $\hat{r}$, which is proved in Theorem \ref{thm:2_sum_nash}, by definition, no agent $i$ can improve its return under $\hat{r}$ by changing its own policy to any $\pi'^i$:
\begin{equation}
J^{(\pi'^i, \pi^{-i*})}_{\hat{r}} - J^{\pi^*}_{\hat{r}} \leq 0.
\end{equation}
We now evaluate the potential gain for agent $i$, when changing its policy to $\pi'^i$ under the true target reward $\tilde{r}$. 
By applying the triangle inequality and the established value function error bound, we obtain:
{ \footnotesize
\begin{equation}
\begin{aligned}
J^{(\pi'^i, \pi^{-i*})}_{\tilde{r}} - J^{\pi^*}_{\tilde{r}} 
& = J^{(\pi'^i, \pi^{-i*})}_{\tilde{r}} - J^{(\pi'^i, \pi^{-i*})}_{\hat{r}} 
+ J^{(\pi'^i, \pi^{-i*})}_{\hat{r}} - J^{\pi^*}_{\hat{r}}
\\
& \quad
+ J^{\pi^*}_{\hat{r}} - J^{\pi^*}_{\tilde{r}} \\
& \leq \left| J^{(\pi'^i, \pi^{-i*})}_{\tilde{r}} - J^{(\pi'^i, \pi^{-i*})}_{\hat{r}} \right| + 0 + \left| J^{\pi^*}_{\hat{r}} - J^{\pi^*}_{\tilde{r}} \right| \\
& \leq \frac{n \epsilon_r(N)}{1 - \gamma} + \frac{n \epsilon_r(N)}{1 - \gamma} = \frac{2n \epsilon_r(N)}{1 - \gamma}.
\end{aligned}
\end{equation}
}
Let $\epsilon = \frac{2n \epsilon_r(N)}{1 - \gamma}$. 
We have shown that for any agent $i$ and any alternative policy $\pi'^i$, the improvement in return is bounded by $\epsilon$. 
Therefore, $\pi^*$ is an $\epsilon$-Nash equilibrium under the true preference objective, which completes the proof. 
\end{proof}

\revise{Theorem \ref{thm:4_epsilon_nash} establishes that the policy learned from aggregated rewards approximates the true Nash equilibrium. }
However, the effectiveness of the additive aggregation also relies on Assumption \ref{asp:6_bounded_cross_impact}, which ensures that local improvements do not negatively impact the global objective. 
To support this assumption, we introduce a regularization term, which encourages consistency between the local preference and the aggregated global preference derived from the sum of local rewards. 
Formally, we define an auxiliary loss: 
\begin{equation}
\label{eq:reg_loss}
\begin{aligned}
\mathcal{L}_{\text{reg}}^i = 
& \mathbb{E}_{(\sigma^{(0)},\sigma^{(1)},p^i)} 
\bigg[ \max \bigg( 0, 
\\
& \quad\quad \mathbb{I}[\sigma^{(1)} \succ^i \sigma^{(0)}] \cdot \Big( \hat{R} (\sigma^{(0)}) - \hat{R}(\sigma^{(1)}) \Big) \bigg) 
\bigg],
\end{aligned}
\end{equation}
where $\hat{R}(\sigma) = \sum_{i} \sum_{t} \gamma^t \hat{r}_\psi^i(s_t, A_t)$ is the estimated discounted return of the segment for all agents, and $\mathbb{I}[\cdot]$ is an indicator function. 
\revise{This regularization penalizes the case where an agent's locally preferred segment leads to a decrease in the estimated return of all agents, thereby preserving robustness against occasional violations of Assumption \ref{asp:6_bounded_cross_impact} caused by complex scenarios or severe partial observability. }
The total loss for agent $i$'s reward model is then:
\begin{equation}
\label{eq:reward_loss}
\mathcal{L}^i = \mathcal{L}_{\text{pref}}^i + \lambda \mathcal{L}_{\text{reg}}^i,
\end{equation}
where $\lambda$ is a hyperparameter balancing the two terms.

\begin{algorithm}[t]
\caption{\method Algorithm using the QMIX method}
\label{alg:2_practical}
\begin{algorithmic}[1]
\REQUIRE Preference feedback frequency $K$, number of segment pairs per agent per feedback session $M$
\ENSURE Agent-specific reward models $\hat{r}_\psi^i \; (i=1,\cdots, n)$, local Q-functions $Q_\theta^i  \; (i=1,\cdots, n)$, monotonic mixing network $g_\phi$
\STATE Initialize parameters $\psi, \theta, \phi$, replay buffer $\mathcal{D}$, preference buffers $\mathcal{D}_p^i$ for $i=1,\dots,n$
\STATE Initialize feedback counter $C \gets 0$
\FOR{each iteration}
    \FOR{each environment step $t$}
        \STATE Observe global state $s_t$
        \STATE Each agent selects action $a_t^i$ based on $Q_\theta^i(o_t^i, \cdot)$ using $\epsilon$-greedy exploration
        \STATE Execute joint action $A_t = (a_t^1, \dots, a_t^n)$, observe next state $s_{t+1}$
        \STATE Compute global reward {\small $\hat{r}(s_t, A_t) = \sum_{i=1}^n \hat{r}_\psi^i(s_t, A_t)$}
        \STATE Store transition $(s_t, A_t, s_{t+1}, \hat{r}(s_t, A_t))$ in $\mathcal{D}$
    \ENDFOR
    \IF{iteration mod $K = 0$}
        \FOR{each agent $i$}
            \STATE Sample $M$ segment pairs $(\sigma^{(0)}, \sigma^{(1)})$ from $\mathcal{D}$
            \STATE Query teacher $i$ for preferences $p^i$ on these pairs
            \STATE Store $(\sigma^{(0)}, \sigma^{(1)}, p^i)$ in $\mathcal{D}_p^i$
            \STATE Update reward model $\hat{r}_\psi^i$ by minimizing $\mathcal{L}^i$ (Eq. \ref{eq:reward_loss}) using $\mathcal{D}_p^i$
        \ENDFOR
        \STATE Update the global reward function $\hat{r}(s, A) = \sum_i \hat{r}_\psi^i(s, A)$
        \STATE Relabel all rewards in $\mathcal{D}$ using the updated $\hat{r}$
        \STATE Update $C \gets C + M \cdot n$
    \ENDIF
    \FOR{each gradient step}
        \STATE Sample a minibatch of transitions from $\mathcal{D}$
        \STATE Update local Q-functions $Q_\theta^i$ and mixing network $g_\phi$ using QMIX's loss function
    \ENDFOR
\ENDFOR
\end{algorithmic}
\end{algorithm}

\subsection{MARL with Learned Reward Models}
\label{subsec:marl_with_learned_rewards}

Having constructed the agent-specific reward models $\hat{r}_\psi^i$, we now integrate them into a cooperative MARL algorithm. 
We aggregate the local rewards into a global reward signal via summation
\footnote{
\revise{While the additive form resembles Value Decomposition Networks (VDN) \cite{vdn}, the motivation and scope are distinct. VDN decomposes a known global Q-function for credit assignment under a given global reward, while \method aggregates learned local reward models to construct a global reward signal from decentralized preferences. }
}
as justified by Lemma \ref{lem:monotonicity} and Theorem \ref{thm:2_sum_nash}, \ref{thm:4_epsilon_nash}:
\begin{equation}
\hat{r}(s, A) = \sum_{i=1}^n \hat{r}_\psi^i(s, A).
\end{equation}
This additive formulation satisfies the monotonicity requirement $\partial \hat{r}_{\psi} / \partial \hat{r}^i_{\psi} > 0$ and forms a coherent reward signal. 
We use this reward estimation to replace the environment's predefined global reward.
This allows any MARL algorithm that operates on centralized rewards during training to be applied. 
In this work, we adopt QMIX \cite{qmix} for policy optimization, while other CTDE methods, such as VDN \cite{vdn} or MAPPO \cite{mappo} can also be used.

Specifically, we employ two practical techniques to enhance sample efficiency and performance:

\begin{itemize}
    \item \textbf{Continuous preference collection:} 
    Instead of collecting a fixed batch of preference data before training, we periodically collect preferences using the newly generated trajectories during policy learning \cite{pebble}. 
    This enriches the preference dataset with trajectories generated by the current policy, and improves the reward model's accuracy in regions of the state-action space relevant to the latest policy.

    \item \textbf{Experience relabeling:} 
    After each update of the reward models, we relabel the rewards in the replay buffer using the updated global reward estimation. 
    This allows historical transitions to be reused with the most recent reward estimates, increasing the sample efficiency of both the preference data and the environment interactions \cite{pebble}.
\end{itemize}

Algorithm \ref{alg:2_practical} outlines the complete \method procedure, which uses QMIX \cite{qmix} for policy optimization. 
Specifically, in lines 4-10, agents interact with the environment using the current policy, and transitions are stored with rewards computed by the current global reward model. 
Lines 12-22 indicate the periodic preference collection and reward model updates. 
Each agent's reward model is trained on its respective preference buffer. 
After updating the reward models, the global reward is recomputed, and the replay buffer is relabeled. 
Finally, lines 24-27 perform policy improvement using QMIX, which learns a centralized but factorized Q-function that respects the monotonic constraint. 
This iterative process allows \method to simultaneously refine reward models and policy, 
As more preference feedback is collected, coordination progressively improves.

\begin{table}[t]
\caption{Hyperparameter settings for \method.}
\label{tab:hyperparam}
\centering
\begin{tabular}{lc}
\toprule
\multicolumn{1}{c}{\textbf{Hyperparameter}} & \textbf{Value} \\
\midrule
Preference collection frequency $K$ & 500 \\
Segment pairs per collection $M$ & 300 \\
Reward‑model hidden layers & 2 \\
Hidden units per layer & 128 \\
Reward‑model learning rate & $3\times 10^{-4}$ \\
Regularization coefficient $\lambda$ & $2\times 10^{-2}$ \\
Discount factor $\gamma$ & 0.99 \\
Batch size & 256 \\
Policy learning rate & $7\times 10^{-4}$ \\
Q‑target soft‑update rate $\tau$ & $1\times 10^{-4}$ \\
Mixer hidden dimension & 32 \\
Optimizer & Adam \\
Total environment steps & $1\times 10^{6}$ / $3\times 10^{6}$ \\
Segment length $H$ & 3 / 5 / 10 / 20 \\
Replay buffer size & 5000 \\
$\epsilon$‑greedy start/finish & 1.0 / 0.05 \\
\bottomrule
\end{tabular}
\end{table}

\section{Experimental Results} \label{sec:experiment}

\subsection{Setups}
\label{subsec:setup}

In this section, we evaluate the effectiveness of \method through experiments. 
We test \method on benchmark multi-agent tasks \cite{mordatch2017emergence,lowe2017multi} to compare it fairly with other methods.
We also tried it on a custom mobile phone production task, which shows its potential for real-world industrial applications.

\textbf{Baselines. }
We compare \method with QMIX \cite{qmix} with predefined task rewards. 
This baseline serves as an upper-bound oracle, since it optimizes the true reward function provided by environment developers, representing the best possible performance achievable through explicit reward engineering. 
As a widely used multi-agent MARL baseline, QMIX employs a monotonic value decomposition architecture that is compatible with our reward aggregation mechanism, which ensures a fair comparison between preference-based and reward-based optimization. 
This design validates our central claim that preferences can effectively guide multi-agent policy learning compared to engineered rewards.

\textbf{Experimental details.} 
We implement \method using the CTDE framework, which is introduced in \ref{subsec:pre_marl}. For policy optimization, we base it on QMIX \cite{qmix}.
To generate preferences (Line 14 of Algorithm \ref{alg:2_practical}), we simulate teachers using simple role-specific rules. Each agent’s preferences come from comparing trajectory segments. The comparison follows hand-made metrics that consider the agent’s role, individual effort, and teamwork.
The details of these rules are explained in the following subsections for each task.
All hyperparameters for \method are listed in Table \ref{tab:hyperparam}, and the source code is released\footnote{Code: \url{https://github.com/MoonOutCloudBack/MAGPIE_PbRL}}.

\begin{figure*}[t]
    \centering
    \includegraphics[width=1.0\textwidth]{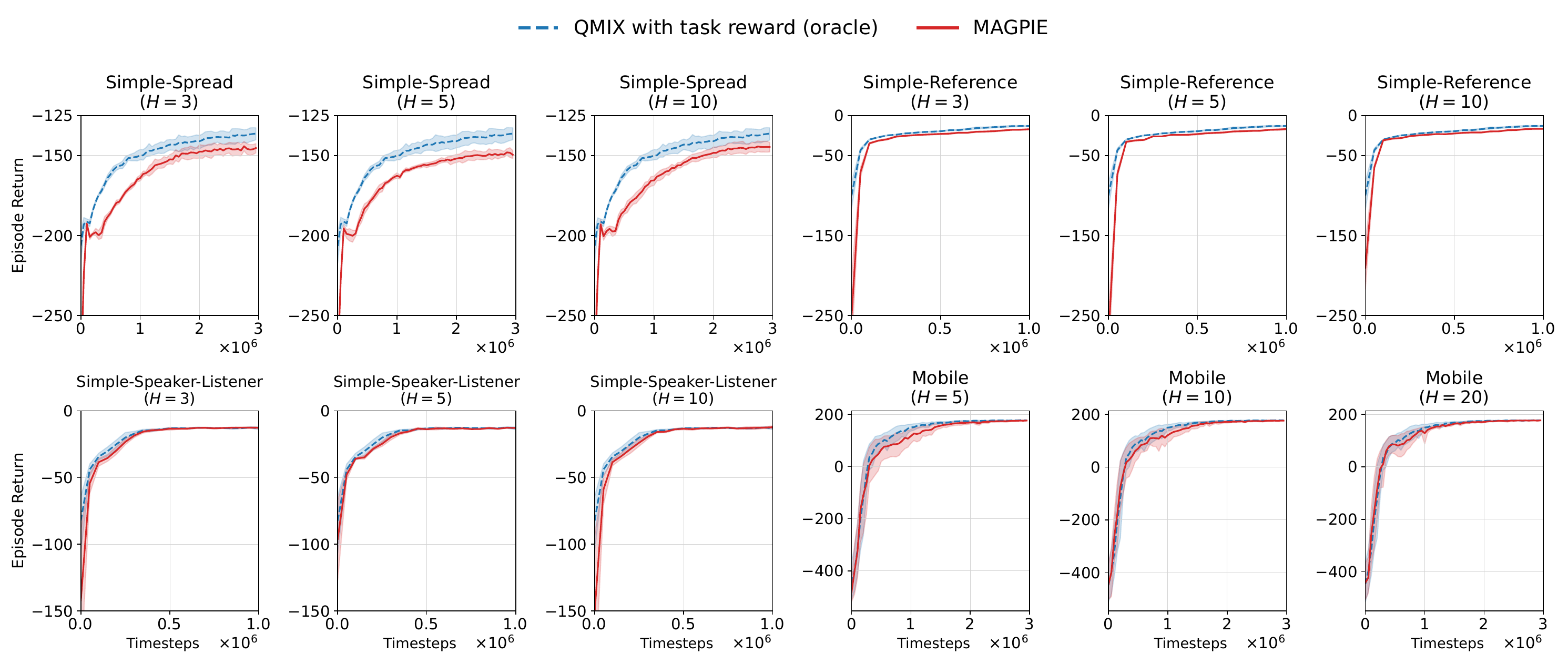}
    \caption{
    The training curves for the episode return of the two methods. 
    Blue: the oracle method (QMIX). Red: our method (\method). 
    Results are averaged over 5 random seeds, with shaded areas indicating standard error. 
    \method achieves comparable performance with QMIX using the true task reward in various tasks. 
    This result validates \method's effectiveness in utilizing decentralized preferences to guide policy learning in multi-agent scenarios.
    }
    \label{fig:performance} 
\end{figure*}

\subsection{Experimental Results on Multi-Agent Benchmark Tasks}
\label{subsec:results_mpe}

\textbf{Task Description. }  
We evaluate \method on three benchmark tasks from the Multi-Agent Particle Environment (MPE) \cite{mordatch2017emergence,lowe2017multi}: Simple-Spread, Simple-Speaker-Listener, and Simple-Reference. 
These tasks progressively increase in heterogeneity and coordination difficulty, providing a suitable testbed for validating whether a preference‑driven framework can align decentralized feedback with global teamwork. 
\begin{itemize}
\item Simple‑Spread involves three homogeneous agents that must cover three distinct landmarks while avoiding collisions. 
Each agent observes only local velocity and relative position observations, and receives a global reward based on the minimum distance to landmarks, with collision penalties. 
This task reflects the credit‑assignment challenge in cooperative MARL, where a single scalar reward must be distributed among agents with identical roles.
\item Simple‑Speaker‑Listener introduces role heterogeneity: a stationary speaker that broadcasts a discrete landmark ID and a mobile listener that navigates to the correct landmark. 
The two agents have disjoint observation spaces; the speaker observes only the target ID, while the listener observes velocities, landmark positions, and communication signals. 
They execute fundamentally different action types, i.e., communication vs. movement. 
This setting directly tests a method's ability to integrate agent‑specific preferences into a coherent policy when agents are not directly comparable.
\item Simple‑Reference further raises the asymmetric partial observability: two agents know only their own target landmarks and must cooperate to reach both. 
Each agent observes only a subset of landmark information, and the reward combines local proximity measures with a global coordination objective. 
The task examines whether decentralized, role‑aware preferences can guide agents to implicitly coordinate under limited information.
\end{itemize}
In all tasks, we utilize discrete action spaces, including movement directions for mobile agents and communication signals for speakers.

\textbf{Preference generation. }
Following previous works \cite{pebble, mu2025clarify}, we generate preference signals through role-specific heuristic rules aligned with each agent's observation and task responsibilities. 
For Simple-Spread, each agent's preferences are derived from its navigation efficiency to assigned landmarks, measured by the minimized distance to all targets while penalizing collisions with peer agents. 
In Simple-Speaker-Listener, the speaker's preferences prioritize accurate communication consistency between transmitted and target landmark IDs, while the listener's preferences emphasize navigation precision to landmarks identified through received signals. 
For Simple-Reference, agents develop preferences based on their distinct partial observations: each agent evaluates trajectory segments through localized target proximity metrics while implicitly coordinating to satisfy global constraints. 
These role‑aware heuristics produce interpretable, actionable preference labels for each agent. 
Preferences are automatically generated by comparing the cumulative heuristic scores of two trajectory segments.

\textbf{Selection of segment length $H$. }
We evaluate \method with segment lengths $H = 3, 5, 10$ across three MPE tasks to account for different temporal granularities in multi-agent behaviors. 
Specifically, $H = 3$ captures short-term interactions, $H = 5$ reflects medium-term coordination, and $H = 10$ represents long-term strategic planning. 
The results confirm that \method achieves robust learning across diverse segment lengths, enabling it to integrate agent-specific preferences without requiring precise hyperparameter tuning, which is critical for real-world scenarios with unknown optimal horizons.

\textbf{Experimental results.}
As shown in Fig. \ref{fig:performance}, \method demonstrates performance comparable to the baseline. 
Specifically, in Simple‑Spread, \method converges slightly slower and attains a marginally lower final return than the oracle. 
This result is expected, as \method first learns a reward model from preferences before conducting policy learning. 
In Simple‑Reference, \method exhibits a rapid initial learning phase followed by steady improvement similar to the baseline, indicating efficient reward-model acquisition prior to policy refinement. 
In Simple‑Speaker‑Listener, the learning trajectory closely matches the oracle, demonstrating \method's capability to handle heterogeneous agents effectively.
These results validate \method's ability to coordinate multi‑agent behavior using only decentralized, role‑specific preferences without relying on predefined rewards. 
The proposed monotonic aggregation mechanism effectively aligns local preferences with the global objective, while continuous reward-model updates enable efficient policy optimization. 
By leveraging preference-driven coordination, \method offers a promising alternative to traditional reward engineering in complex multi-agent reinforcement learning systems.

\subsection{Experimental Results on a Mobile Phone Production Task}
\label{subsec:results_mobile}

\textbf{Task Description. }
To evaluate \method in a real-world industrial setting, we design a custom multi‑agent environment that simulates a sequential mobile phone production line with three main stages: Surface Mount Technology (SMT) \cite{sawik2002balancing}, Assembly \cite{komaki2019flow}, and Test \cite{wu2008modeling}.
Each stage is managed by one agent. Tasks randomly arrive at the first stage and move step by step through each stage until they are finished.
Each agent handles scheduling for its stage. They choose processing speeds that trade off throughput against energy consumption.
All agents work cooperatively. The total reward for the system depends on production efficiency, buffer management, and energy usage.

\begin{itemize}
\item \textbf{State space}:  
The state includes the maximum buffer capacities and current buffer occupancies of all three stages. 
All agents receive the same state as global observation to enable centralized training, while execution remains decentralized. 
Formally, the state vector is defined as:
\begin{equation*}
s = [B_{\text{max}}^{\text{SMT}}, B_{\text{max}}^{\text{Asm}}, B_{\text{max}}^{\text{Test}}, B_{\text{cur}}^{\text{SMT}}, B_{\text{cur}}^{\text{Asm}}, B_{\text{cur}}^{\text{Test}}],
\end{equation*}
where $B_{\text{max}}^i$ is the maximum capacity of stage $i$, and $B_{\text{cur}}^i$ denotes the current number of tasks queued at that stage.

\item \textbf{Action space}: 
The action of each agent $i$ is discrete, denoted as $a_i \in \{0, 1, 2\}$, representing a processing speed level: low, medium, or high. 
The corresponding processing rate $p_i(a_i)$ and energy cost $e_i(a_i)$ vary by stage, which reflects the physical and operational differences among SMT, Assembly, and Test equipment. 
Higher speed levels increase throughput but lead to higher energy consumption, which creates a non-linear trade-off between efficiency and energy use. 
The specific actions for each stage show the heterogeneity of agents. This also makes it hard to decentralized coordination, as each agent must balance its local productivity with its impact on downstream stages and overall system energy consumption.

\item \textbf{Transition}:  
At each timestep, new tasks arrive at the SMT stage following a Poisson distribution. 
Each agent $i$ can process up to $p_i(a_i)$ tasks from its own buffer. Completed tasks are passed to the next stage, except at the final Test stage, where they are considered finished. 
Buffer overflows are truncated and penalized. 
The buffer dynamics for stage $i$ are:
\begin{equation*}
B_{\text{cur},t+1}^i = \min\Big(B_{\text{max}}^i, \max\big(0, B_{\text{cur},t}^i - p_i(a_i) \big) + \text{inflow}_t^i\Big),
\end{equation*}
where $\text{inflow}_t^i$ is the number of tasks arriving from the previous stage, or, for SMT, from the external arrival process. 
The stochastic arrivals and coupled buffer dynamics create a challenging coordination problem under uncertainty.

\item \textbf{Reward function}:  
The shared reward combines three objectives: encouraging task completion, penalizing buffer overflows, and discouraging high energy consumption. At each timestep,
\begin{equation*}
r(s_t, a_t) = \alpha \cdot n_{\text{complete}} - \beta \cdot \sum_{i=1}^{3} o_t^i - \gamma \cdot \sum_{i=1}^{3} e_i(a_i),
\end{equation*}
where $n_{\text{complete}}$ is the number of tasks finished at the Test stage, $o_t^i = \max(0, B_{\text{cur},t}^i - B_{\text{max}}^i)$ denotes overflow at stage $i$, and hyperparameters are set to $\alpha=0.5$, $\beta=1.0$, and $\gamma=0.02$ after scaling.

\end{itemize}
The task is episodic with a horizon of $250$ timesteps, and the cumulative reward over an episode reflects the overall coordination effectiveness of the agent team.

\textbf{Preference generation. }
We follow the method used in the MPE experiments. Preference signals are created using simple rules for each role.
For the SMT agent (first stage), preferences prioritize balanced throughput, considering both its local processing efficiency and the overall assembly-line completion rate. 
Penalties are applied for buffer overflows at the station or in the Assembly stage, weighted by their respective energy costs. 
For the Assembly agent (middle stage), preferences emphasize contributions to local efficiency and final task completion. 
Penalties are incurred for buffer overflows at the station or in the Test stage, with weights based on corresponding energy costs. 
For the Test agent (final stage), preferences focus on terminal task completion, penalizing violations of local buffer constraints, as this stage directly determines the finished output.  
Preferences are then automatically produced by comparing cumulative heuristic scores of trajectory segment pairs.

\textbf{Selection of segment length $H$. }
Similar to the MPE experiments, we evaluate \method using segment lengths $H = 5, 10, 20$, which represent short-term operational decisions, medium-term workflow coordination, and long-term production planning, respectively. 
This range enables an analysis of \method's ability to extract meaningful local preferences from trajectory segments with different temporal granularities. 
The consistent performance across all $H$ values highlights \method's robustness in incorporating role-specific feedback.

\textbf{Experimental Results. }
Figure \ref{fig:performance} presents the learning curves of \method and the baseline for the mobile phone production task. 
Across all configurations, \method achieves final performances comparable to the oracle. 
The results also reveal the possible impact of segment length $H$ on learning: while all variants converge to similar returns, larger $H$ values slightly accelerate learning, which supports the intuition that longer horizons provide more reliable signals for policy comparison. 
This aligns with Theorem \ref{thm:1_nash_iteration}, which states that sufficiently long segments ensure finite-horizon return differences reflect infinite-horizon improvements. 
In summary, Sections \ref{subsec:results_mpe} and \ref{subsec:results_mobile} demonstrate that \method achieves oracle-level performance, effectively handles complex industrial tasks, and maintains stability across design choices, which shows its potential for real-world deployment in reward-free multi-agent systems.

\subsection{\revise{More Experimental Results}}

\begin{figure*}[t]
    \centering
    \includegraphics[width=1.0\textwidth]{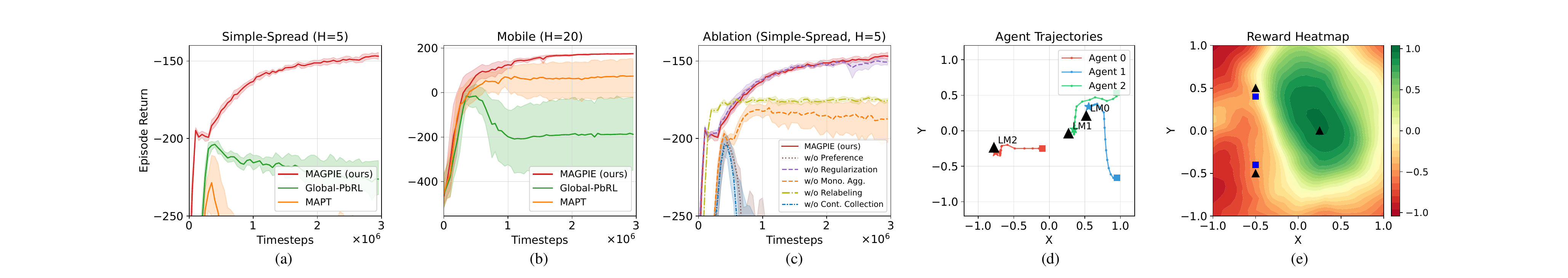}
    \caption{\revise{
    (a, b) Performance comparison of \method and preference-based baselines on the Simple-Spread ($H=5$) and Mobile task ($H=20$).
    (c) Ablation results on the Simple-Spread task ($H$ = 5). 
    (d, e) Visualizations of \method's policy behavior and reward model heatmap on the Simple-Spread task.
    }}
    \label{fig:curve_202606_revise} 
\end{figure*}

\begin{figure*}[t]
    \centering
    \includegraphics[width=1.0\textwidth]{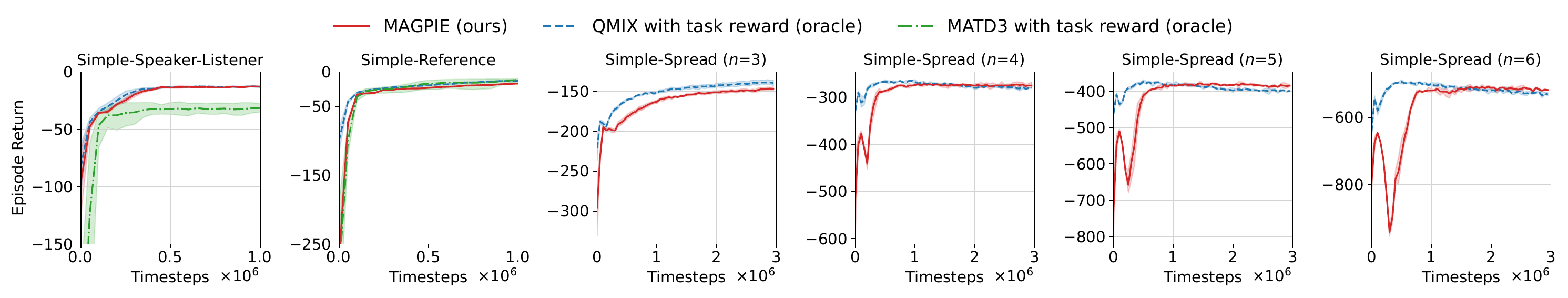}
    \caption{\revise{
    (a, b) Performance comparison of \method and reward-engineered MARL baselines on the Simple-Speaker-Listener task ($H=5$) and the Simple-Reference task ($H=20$). 
    (c, d, e, f) Performance comparison of \method and QMIX on the Simple-Spread task ($H$ = 5) with various numbers of agents. 
    }}
    \label{fig:curve_202606_revise2} 
\end{figure*}

\begin{table}[ht]
\caption{
\revise{Training time (hours) of \method and QMIX.}
}
\label{tab:training_time}
\centering
\begin{tabular}{lrr}
\toprule
Task & \method & QMIX \\
\midrule
Simple-Reference        & 13.6247 $\pm$ 0.0133 & 13.4809 $\pm$ 0.0302 \\
Simple-Speaker-Listener & 9.6424  $\pm$ 0.0166 & 9.5467  $\pm$ 0.0103 \\
Simple-Spread           & 15.4434 $\pm$ 0.0537 & 15.3773 $\pm$ 0.0200 \\
Mobile                  & 37.3630 $\pm$ 0.0019 & 37.3621 $\pm$ 0.0002 \\
\bottomrule
\end{tabular}
\end{table}

\revise{
\textbf{Comparison to more preference-based methods. }
To better show the advantages of \method, we add comparisons with MAPT \cite{zhu2024decoding} and a standard PbRL baseline that learns a single global reward model (Global-PbRL) \cite{bui2025omapl}. 
All methods use the same rule-based preference generation procedure described in Sections \ref{subsec:results_mpe} and \ref{subsec:results_mobile}. 
To simulate human cognitive overload \cite{mu2024sepoa, berner2019dota}, we assign a random global preference label when individual agent preferences within a team conflict. 
Figure \ref{fig:curve_202606_revise}(a, b) shows that \method consistently outperforms both baselines across tasks. 
These results demonstrate the effectiveness of the agent-specific preference decomposition and monotonic aggregation mechanisms in \method. 
}

\revise{
\textbf{Comparison to more reward-engineered methods. }
We compare \method with MATD3 \cite{matd3} using ground-truth reward signals.
Figure \ref{fig:curve_202606_revise2}(a, b) shows that \method achieves comparable performance to MATD3, although MATD3 obtains lower final results in some tasks. 
}

\revise{
\textbf{Evaluation with more agents. }
To demonstrate the feasibility of applying \method in large-scale scenarios, we conducted experiments on the Simple-Spread task with 3$\sim$6 agents. 
Figure \ref{fig:curve_202606_revise2}(c$\sim$f) shows that \method consistently achieves performance comparable to the oracle QMIX baseline, which provides evidence that \method scales to larger-scale scenarios. 
}

\revise{
\textbf{Complexity analysis. }
Compared to standard QMIX, the extra training complexity of \method is $O(n)$.
Specifically, during reward model training, \method trains one reward model $\hat{r}^i_\psi$ per agent independently, which has computational complexity $O(n \cdot N \cdot H)$, with $n$ being the number of agents, $N$ the preference data size per agent, and $H$ the segment length. 
During policy training, \method uses standard QMIX and only replaces the environment reward with the sum of learned reward models.
The extra overhead per transition only involves $n$ forward passes of the reward models and one summation operation.
Let $C_{\text{model}}$ be the cost of a single reward model forward pass; then, the extra complexity of reward computation is $O(n \cdot C_{\text{model}})$, and the summation is $O(n)$.
Additionally, we compared the practical training time of \method against standard QMIX in Table~\ref{tab:training_time}.
The training time of \method is only marginally higher than QMIX, confirming that \method avoids significant runtime overhead.
}

\revise{
\textbf{Ablation studies. }
We conduct ablation studies on four key components of \method: 
(1) the preference-based reward learning module, 
(2) the regularization term enforcing Assumption \ref{asp:6_bounded_cross_impact}, 
(3) the additive monotonic aggregation mechanism (we replace it with multiplicative aggregation), and 
(4) the continuous preference collection and experience relabeling techniques. 
Figure \ref{fig:curve_202606_revise}(c) shows that the preference module, monotonic aggregation, and both training techniques are essential for successful learning. 
Also, removing the regularization term maintains comparable overall performance but causes slight instability during the late stages of training.
This occurs because our expert design explicitly evaluates each agent's contribution to the team objective, making violations of Assumption \ref{asp:6_bounded_cross_impact} naturally rare in fully cooperative tasks. 
Nevertheless, the regularization term provides an additional safety guarantee against potential negative externalities. 
Therefore, we recommend keeping this regularization term to ensure robustness and training stability in more general scenarios. 
}

\revise{
\textbf{Visualizations. }
Figure \ref{fig:curve_202606_revise}(d) visualizes the learned policies of \method in the Simple-Spread environment, which shows three agents successfully covering distinct landmarks without collisions. 
Furthermore, Figure \ref{fig:curve_202606_revise}(e) displays the learned reward heatmap, which shows that the reward of each agent peaks near unoccupied landmarks when the positions of other agents are fixed.
This intuitive pattern confirms the effectiveness of the learned reward model.
}

\section{Conclusion} \label{sec:conclusion}

This paper proposes \method, a multi-agent reinforcement learning method that replaces traditional reward engineering with agent-specific preferences. 
Instead of using complex global reward functions, \method relies on local preference feedback, which provides an intuitive way to train diverse agents in collaborative settings. 
Our theoretical analysis shows that optimizing decentralized preferences can achieve a Nash equilibrium, and the proposed monotonic aggregation mechanism ensures that local improvements contribute to global performance. 
Experimental results across various benchmark tasks and a sequential production line task show that \method is competitive with reward-engineering baselines, which proves its effectiveness and potential for practical applications. 
This paper offers a practical solution for solving multi-agent cooperation tasks in complex scenarios, where defining precise reward signals is challenging.

\section*{Acknowledgment}
This work is supported by the Beijing Natural Science Foundation (L233005), 
the National Key Research and Development Program of China (2022YFA1004600), 
NSFC (No. 62125304, 62192751), 
the Fundamental and Interdisciplinary Disciplines Breakthrough Plan of the Ministry of Education of China (JYB2025XDXM312), 
the 111 International Collaboration Project (B25027), 
and the BNRist project (BNR2024TD03003). 
The author would like to express gratitude to Xun Wang and Haixuan Tong for the illuminating discussions during the preliminary stages of this work.

{
\appendices
\section{Further Analysis of Algorithm Convergence}
\label{appendix:convergence}

In the main text, Theorem \ref{thm:4_epsilon_nash} guarantees that a joint policy maximizing the aggregated reward $\hat{r}$ results in an $\epsilon$-Nash equilibrium for the original problem. 
This result assumes that we can solve the optimization problem perfectly, while in real-world scenarios, the policy optimization is supported by randomly sampled training data, which often leads to errors. 
In this appendix, we further investigate the theoretical properties of the proposed algorithm. 
We prove that the learning process converges under non-idealized, stochastic conditions. 

To make the theoretical analysis feasible, we examine a linear implementation of the monotonic mixing network in QMIX \cite{qmix, vdn}. 
Specifically, we analyze the case where the global Q-value can be decomposed into the sum of local Q-values, which are calculated on agents' local observations and actions, i.e., $Q_{\text{tot}}(s, A) = \sum_{i=1}^n Q_i(o_i, a_i)$. 
In the next subsection, we model the policy update process as a stochastic approximation algorithm and analyze its convergence using the Ordinary Differential Equation (ODE) method proposed in \cite{borkar2000ode}.

\subsection{Problem Setup and Definitions}

Let $\mathcal{S}$ denote the global state space and $\mathcal{A} = \prod_{i=1}^n \mathcal{A}_i$ denote the joint action space. We define $\mathcal{Q} = \{ Q: \mathcal{S} \times \mathcal{A} \to \mathbb{R} \}$ as the space of all possible global Q-functions. 
The infinity norm of this space is denoted by $\|Q\|_\infty = \max_{s, A} |Q(s, A)|$. 

We define the decomposable subspace of the Q-function space $\mathcal{Q}_d \subset \mathcal{Q}$ as follows:
\begin{equation}
    \mathcal{Q}_d = \left\{ Q \in \mathcal{Q} \mid \exists \{Q_i\}_{i=1}^n, Q(s, A) = \sum_{i=1}^n Q_i(s^i, a^i) \right\},
\end{equation}
where $Q_i$ represents the local utility function of agent $i$.
Then, we define the optimal Bellman operator $T: \mathcal{Q} \to \mathcal{Q}$ as: 
\begin{equation}
    (TQ)(s, A) = \mathbb{E}\left[ \hat{r}(s, A) + \gamma \max_{A'} Q(s', A') \mid s, A \right].
\end{equation}
As proved in \cite{RL_sutton}, the optimal Bellman operator $T$ is a $\gamma$-contraction mapping under the infinity norm, i.e., $\|TQ - TQ'\|_\infty \le \gamma \|Q - Q'\|_\infty$.

To update the Q function, the target global Q-value $TQ$ should be projected back into the decomposable subspace $\mathcal{Q}_d$. 
We define $\Pi: \mathcal{Q} \to \mathcal{Q}_d$ as the projection operator. 
Then, the iterative update of the algorithm can be written in the standard stochastic approximation form:
\begin{equation}
\label{eq:sa_update}
Q_{k+1} = Q_k + \alpha_k \left[ \Pi (T Q_k + M_{k+1}) - Q_k \right],
\end{equation}
where $\alpha_k$ is the step size sequence and $M_{k+1}$ is the noise term arising from sampling. 
Since the projection operator $\Pi$ is linear and $Q_k$ is already in $\mathcal{Q}_d$, we have $\Pi Q_k = Q_k$. 
Thus, we can rewrite the update as:
\begin{equation}
    Q_{k+1} = Q_k + \alpha_k \left[ (\Pi T Q_k - Q_k) + \Pi M_{k+1} \right],
\end{equation}
where $\alpha_k$ is the step size of the $k$-th iteration. 
This update can be associated with the following ODE: 
\begin{equation}
    \label{eq:ode}
    \dot{Q}(t) = h(Q(t)) = \Pi T Q(t) - Q(t).
\end{equation}

To establish the theoretical results of convergence, we introduce the following assumptions regarding the environment and the projection operator:

\begin{enumerate}
\item The state space and action space are discrete and finite.

\item The global optimal Q-function $Q^*$ under the aggregated reward $\hat{r}$ lies within the decomposable subspace $\mathcal{Q}_d$. This implies that there exist local functions $Q_i^*$ such that their sum exactly represents the global optimal value. 

\item The projection operator $\Pi$ is non-expansive under the infinity norm. That is, for any $Q, Q' \in \mathcal{Q}$, the inequality $\|\Pi Q - \Pi Q'\|_\infty \le \|Q - Q'\|_\infty$ holds. 

\item The step size sequence $\{\alpha_k\}$ satisfies the Robbins-Monro conditions \cite{robbins1951stochastic}: $\sum \alpha_k = \infty$ and $\sum \alpha_k^2 < \infty$. 
\end{enumerate}

\subsection{Convergence Result}

Based on these assumptions, we provide the following theorem to establish a convergence analysis. 

\begin{theorem}
\label{thm:5_ode}
$Q^*$ is the unique globally asymptotically stable equilibrium point of the associated ODE in \eqref{eq:ode}.
Furthermore, the sequence $\{Q_k\}$ generated by the algorithm converges almost surely to the global optimal Q-function $Q^*$. 
\end{theorem}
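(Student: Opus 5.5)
The plan follows the Borkar--Meyn ODE method \cite{borkar2000ode}. First I establish a contraction property of the composite operator $\Pi T$, use it to identify $Q^*$ as the unique equilibrium of \eqref{eq:ode}, and prove global asymptotic stability with a Lyapunov function. Then I verify the stability and noise conditions that transfer ODE stability to almost sure convergence of the iterates \eqref{eq:sa_update}.

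\textbf{Step 1 (fixed point and contraction).} By assumption 3, $\Pi$ is non-expansive in $\|\cdot\|_\infty$, and $T$ is a $\gamma$-contraction. Hence
\begin{equation*}
\|\Pi T Q - \Pi T Q'\|_\infty \le \|TQ - TQ'\|_\infty \le \gamma \|Q-Q'\|_\infty ,
\end{equation*}
so $\Pi T$ is a $\gamma$-contraction on the finite-dimensional space $\mathcal{Q}$ (assumption 1). By Banach's theorem it has a unique fixed point. Since $Q^* = TQ^*$ and $Q^* \in \mathcal{Q}_d$ (assumption 2), and since $\Pi$ is the identity on $\mathcal{Q}_d$ (used already to derive the rewritten update), we get $\Pi T Q^* = \Pi Q^* = Q^*$. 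Therefore $Q^*$ is the unique zero of $h(Q) = \Pi T Q - Q$, which makes it the unique equilibrium of \eqref{eq:ode}.

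\textbf{Step 2 (global asymptotic stability).} I take the Lyapunov function $V(Q) = \|Q - Q^*\|_\infty$. This is the step I expect to be most delicate, because $V$ is not differentiable. I handle it by invoking the standard result for ODEs $\dot Q = F(Q) - Q$ with $F$ a contraction in some norm: such a system has its fixed point as the unique globally asymptotically stable equilibrium (Borkar--Soumyanath). If a self-contained argument is preferred, I would compute the upper Dini derivative. At each time $t$, choose a maximizing coordinate $(s,A)$ of $|Q(t)-Q^*|$. Writing $Q(t)-Q^* = e$, the derivative in the direction of $\mathrm{sign}(e_{s,A})$ equals $(\Pi T Q - \Pi T Q^*)_{s,A} - e_{s,A}$, whose signed value is at most $\gamma V - V$. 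This gives $D^+V \le -(1-\gamma)V$, hence $V(t) \le e^{-(1-\gamma)t}V(0)$, which is exponential and global convergence.

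\textbf{Step 3 (stochastic approximation).} To apply the Borkar--Meyn theorem I check four conditions.

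(i) $h$ is Lipschitz, with constant $1+\gamma$.

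(ii) $\{M_{k+1}\}$ is a martingale difference sequence with respect to the natural filtration. It arises from sampling $s'$ and the reward conditional on $(s,A)$, and it satisfies $\mathbb{E}\|M_{k+1}\|^2 \le K(1+\|Q_k\|^2)$ because rewards are bounded (Assumption \ref{asp:2_reward_bounded} and boundedness of $\hat r$). Linearity of $\Pi$ carries these properties over to $\Pi M_{k+1}$.

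(iii) The step sizes satisfy the Robbins--Monro conditions (assumption 4).

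(iv) For boundedness of the iterates, I check the scaled limit $h_\infty(Q) = \lim_{c\to\infty} h(cQ)/c$. Since $T$ is affine in $Q$ up to the max, $h_\infty(Q) = \Pi T_0 Q - Q$, where $T_0$ is the Bellman operator with zero reward. This is again a $\gamma$-contraction minus the identity, so by the Step 2 argument the origin is globally asymptotically stable for $\dot Q = h_\infty(Q)$. This yields $\sup_k\|Q_k\|<\infty$ almost surely.

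Borkar--Meyn then gives $Q_k \to Q^*$ almost surely. One point needs care: applying $\Pi$ requires the linearity of $\Pi$ noted in the setup, whereas the non-expansiveness assumption is what makes the contraction and Lyapunov bounds go through.
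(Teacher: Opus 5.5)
Your proposal is correct and follows the paper's proof essentially step by step: you show $\Pi T$ is a $\gamma$-contraction using the non-expansiveness of $\Pi$, identify $Q^*$ as its fixed point via $\Pi Q^*=Q^*$, use the Lyapunov function $\|Q-Q^*\|_\infty$ to get $D^+V\le-(1-\gamma)V$, and check the martingale noise conditions before invoking Borkar--Meyn. Your coordinatewise Dini-derivative computation and your explicit $h_\infty$ check (which gives bounded iterates) are more careful than the paper, which gets the derivative from a heuristic Euler-step approximation and never verifies the iterate-stability condition that Borkar--Meyn require.
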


\begin{proof}
We use the ODE method proposed by \cite{borkar2000ode}. 
First, we define the composite operator $G = \Pi T$, and analyze the ODE $\dot{Q} = G(Q) - Q$. 
We show that $G$ is a contraction mapping under the infinity norm. 
Specifically, for any $Q_1, Q_2 \in \mathcal{Q}$, we have:
\begin{equation}
\begin{aligned}
    \| G(Q_1) - G(Q_2) \|_\infty &= \| \Pi T Q_1 - \Pi T Q_2 \|_\infty \\
    &\le \| T Q_1 - T Q_2 \|_\infty \\
    &\le \gamma \| Q_1 - Q_2 \|_\infty.
\end{aligned}
\end{equation}
The first inequality follows from the non-expansiveness of $\Pi$, and the second inequality follows from the contraction property of $T$. Thus, $G$ is a contraction mapping with $\gamma < 1$.

We assume in the previous subsection that the optimal value function $Q^*$ belongs to $\mathcal{Q}_d$ and satisfies $T Q^* = Q^*$. 
Since $Q^* \in \mathcal{Q}_d$, we have $\Pi Q^* = Q^*$. Therefore, 
\begin{equation}
    G(Q^*) = \Pi T Q^* = \Pi Q^* = Q^*.
\end{equation}
This implies that $Q^*$ is the unique fixed point of the map $G$. 
Consequently, $Q^*$ is the unique equilibrium point of the ODE in \eqref{eq:ode}, as $h(Q^*) = 0$. 

We construct a Lyapunov function $V(Q) = \|Q - Q^*\|_\infty$, and analyze the derivative of $V(Q(t))$ along the trajectory of the ODE. 
For a small time step $\eta > 0$, we can approximate the trajectory as: 
\begin{equation}
\begin{aligned}
    Q(t+\eta) &\approx Q(t) + \eta (\Pi T Q(t) - Q(t)) \\
    &= (1-\eta) Q(t) + \eta G(Q(t)).
\end{aligned}
\end{equation}
We subtract $Q^*$ from both sides. 
Note that $Q^* = (1-\eta)Q^* + \eta Q^*$ because $G(Q^*) = Q^*$, we get:
\begin{equation}
\begin{aligned}
    & \|Q(t+\eta) - Q^*\|_\infty \\
    \le ~~ & (1-\eta)\|Q(t) - Q^*\|_\infty + \eta \|G(Q(t)) - Q^*\|_\infty \\
    \le ~~ & (1-\eta)\|Q(t) - Q^*\|_\infty + \eta \gamma \|Q(t) - Q^*\|_\infty \\
    = ~~ & (1 - (1-\gamma)\eta) \|Q(t) - Q^*\|_\infty.
\end{aligned}
\end{equation}
Dividing by $\eta$ and letting $\eta \to 0^+$, we obtain the upper right derivative:
\begin{equation}
    \frac{d^+}{dt} V(Q(t)) \le -(1-\gamma) V(Q(t)).
\end{equation}
This inequality shows that the error $V(Q(t))$ decays to zero at an exponential rate of $1-\gamma$. 
Therefore, $Q^*$ is the globally asymptotically stable equilibrium of the ODE. 

Next, we verify the noise condition with random sampling. 
The term $M_{k+1}$ represents the standard Q-learning noise, which is a martingale difference sequence with respect to the history \cite{borkar2000ode}. 
Since $\Pi$ is a linear operator, the projected noise $\Pi M_{k+1}$ remains a martingale difference sequence, i.e., $\mathbb{E}[\Pi M_{k+1} \mid \mathcal{F}_k] = 0$.
Furthermore, since the rewards are bounded and the state-action space is finite, the variance of the noise is bounded by a quadratic function of the current Q-values. 
Specifically, there exists a constant $C_0$ such that $\mathbb{E}[\|M_{k+1}\|^2_\infty \mid \mathcal{F}_k] \leq C_0(1 + \|Q_k\|^2_\infty)$. This satisfies the standard assumption for stability. 

Therefore, the ODE in \eqref{eq:ode} has a unique globally asymptotically stable equilibrium $Q^*$, and the noise satisfies the martingale difference condition. 
According to Theorem 2.2 in \cite{borkar2000ode}, the sequence $\{Q_k\}$ converges almost surely to $Q^*$. 
This completes the proof. 
\end{proof}

\begin{remark}
Theorem \ref{thm:5_ode} establishes that the algorithm converges to the global optimal policy $\pi^*$ for the learned reward model $\hat{r}$. 
We combine this with Theorem \ref{thm:4_epsilon_nash}, which states that the optimal policy for $\hat{r}$ corresponds to an $\epsilon$-Nash equilibrium of the true underlying preferences. 
Therefore, we conclude that under the above assumptions, the proposed method converges to an $\epsilon$-Nash equilibrium policy using decentralized preference feedback.
This provides a theoretical foundation for preference-based multi-agent learning in stochastic environments. 
\end{remark}

}

\bibliographystyle{IEEEtran}
\bibliography{ref/refs_our_works, ref/refs_MARL, ref/refs_PbRL, ref/refs_rl, ref/refs_theory, ref/refs_misc}

\end{document}